\documentclass[preprint,12pt]{elsarticle}




\usepackage{amssymb}
\usepackage{mathrsfs}
\usepackage{mathrsfs}
\usepackage{mathrsfs}
\usepackage{amsfonts}
\usepackage{amsfonts}
\usepackage{amsfonts}
\usepackage{mathrsfs}
\usepackage{amsmath}
\usepackage{graphicx}
\usepackage{multirow}
\usepackage{caption2}
\usepackage{float}
\usepackage{booktabs}
\usepackage{algorithm}
\usepackage{algorithmic}
\usepackage{subfigure}


\setlength{\topmargin}{0pt} \setlength{\headheight}{0pt}
\setlength{\topskip}{0pt} \setlength{\oddsidemargin}{-3pt}
\setlength{\marginparwidth}{0pt} \setlength{\textwidth}{455pt}
\setlength{\textheight}{648pt} \setlength{\parindent}{1.5em}

\newtheorem{theorem}{Theorem}[section]

\numberwithin{equation}{section}

\newtheorem{lemma}[theorem]{Lemma}

\newtheorem{proposition}[theorem]{Proposition}
\newtheorem{remark}[theorem]{Remark}

\newenvironment{proof}[1][Proof]{\textbf{#1. }}{\ \rule{0.5em}{0.5em}}%

\graphicspath{{figures/}} \journal{}

\begin{document}

\begin{frontmatter}



\title{Nonparametric regression using needlet kernels for spherical data
\tnoteref{t1}} \tnotetext[t1]{The research was supported by the
National Natural Science Foundation of China (Grant Nos. 61502342,
11401462)}


\author{Shaobo Lin\corref{*}}\cortext[*]{Corresponding author: sblin1983@gmail.com}

\address{  College of Mathematics and Information Science, Wenzhou
University, Wenzhou 325035, China }

\begin{abstract}
Needlets have been recognized as  state-of-the-art tools to tackle
spherical data, due to  their excellent localization properties in
both  spacial and frequency domains.
 This paper  considers
developing kernel methods associated with the   needlet kernel for
nonparametric regression problems whose predictor variables are
defined on a sphere. Due to the localization property in the
frequency domain, we prove that the regularization parameter of the
 kernel ridge regression associated with the needlet kernel can
decrease arbitrarily fast. A natural consequence is that the
regularization term for the   kernel ridge regression is not
necessary in the sense of rate optimality. Based on the excellent
localization property in the  spacial domain  further, we also prove
that all the  $l^{q}$ $(0<q \leq 2)$ kernel regularization estimates
associated with the needlet kernel, including the kernel lasso
estimate and the kernel bridge estimate, possess almost the same
generalization capability for a large range of regularization
parameters in the sense of rate optimality.
 This finding tentatively reveals that, if the needlet kernel is utilized,
 then  the choice of $q$ might not have a strong impact in terms
of the generalization capability in some modeling contexts. From
this perspective, $q$ can be arbitrarily specified, or specified
merely by other no generalization criteria like smoothness,
computational complexity, sparsity, etc..
\end{abstract}

\begin{keyword}
Nonparametric regression, Needlet kernel, spherical data, kernel
ridge regression.

\end{keyword}
\end{frontmatter}

\section{Introduction}

Contemporary scientific investigations frequently encounter a common
issue of exploring the relationship between a response variable  and
a number of  predictor variables  whose domain is the surface of a
sphere. Examples include the study of gravitational phenomenon
\cite{Freeden1998}, cosmic microwave background radiation
\cite{Dolelson2003}, tectonic plat geology \cite{Chang2000}  and
image rendering \cite{Tsai2006}. As the sphere is topologically a
compact two-point homogeneous manifold, some widely used schemes for
the Euclidean space such as the neural networks \cite{Gyorfi2002}
and support vector machines \cite{Scholkopf2001} are no more the
most appropriate methods for tackling  spherical data. Designing
  efficient and exclusive approaches to extract useful information from
spherical data has been a recent focus in statistical learning
\cite{Downs2003,Marzio2014,Monnier2011,Pelletier2006}.

Recent years have witnessed   considerable approaches about
nonparametric regression for spherical data. A classical and
long-standing technique is the orthogonal series methods associated
with spherical harmonics \cite{Abrial2008}, with which the local
performance  of the estimate are quite poor, since spherical
harmonics are not well localized but spread out all over the sphere.
Another widely used technique is the stereographic projection
methods \cite{Downs2003}, in which the statistical problems on the
sphere were formulated in the Euclidean space by use of a
stereographic projection. A major problem is that  the stereographic
projection usually leads to a distorted theoretical analysis
paradigm and a relatively sophisticate  statistical behavior.
Localization methods, such as the Nadaraya-Watson-like estimate
\cite{Pelletier2006}, local polynomial estimate \cite{Bickel2007}
and local linear estimate \cite{Marzio2014} are also alternate and
interesting nonparametric approaches. Unfortunately, the manifold
structure of the sphere is not well taken into account in these
approaches. Mihn \cite{Minh2006} also developed a general theory of
reproducing kernel Hilbert space on the sphere and advocated to
utilize the kernel methods to tackle   spherical data. However, for
some popular kernels such as the Gaussian \cite{Minh2010} and
polynomials \cite{Cao2013}, kernel methods suffer from either  a
similar problem as the localization methods, or a similar drawback
as the orthogonal series methods. In fact, it remains open that
whether there is an exclusive kernel for spherical data such that
both the manifold structure of the sphere and the localization
requirement are sufficiently considered.

Our focus in this paper is not on developing a novel  technique to
cope with spherical nonparametric regression problems, but on
introducing an exclusive kernel for kernel methods. To be detailed,
we aim to find a kernel that possesses  excellent spacial
localization property and makes fully use of the manifold structure
of the sphere. Recalling that one of the most important factors to
embody the manifold structure is the  special frequency domain of
the sphere, a kernel which can control the frequency domain freely
is preferable. Thus, the kernel we need is   actually a function
that possesses excellent localization properties, both in  spacial
and frequency domains. Under this circumstance, the needlet kernel
comes into our sights. Needlets, introduced by Narcowich et al.
\cite{Narcowich2006,Narcowich20061}, are a new kind of
second-generation spherical wavelets, which can be shown to make up
a tight frame with both perfect spacial and frequency localization
properties. Furthermore, needlets have a clear statistical nature
\cite{Baldi2008,Kerkyacharian2012}, the most important of which is
that in the Gaussian and isotropic random fields, the random
spherical needlets behave asymptotically as an i.i.d. array
\cite{Baldi2008}. It can be found in \cite{Narcowich2006} that the
spherical needlets correspond a needlet kernel, which is also well
localized in the spacial  and frequency domains. Consequently, the
needlet kernel is proved to possess  the reproducing property
\cite[Lemma 3.8]{Narcowich2006}, compressible property \cite[Theorem
3.7]{Narcowich2006} and best approximation property \cite[Corollary
3.10]{Narcowich2006}.

The aim of the present article is to pursue the theoretical
advantages of the needlet kernel in kernel methods for spherical
nonparametric regression problems. If the kernel ridge regression
(KRR) associated with the needlet kernel is employed, the model
selection  then boils down to determining the frequency  and
regularization parameter. Due to the excellent localization in the
frequency domain, we find that the regularization parameter of KRR
can decrease arbitrarily fast for a suitable  frequency. An extreme
case is that the regularization term is not necessary for KRR in the
sense of rate optimality. This attribution is totally different from
other kernels without good localization property in the frequency
domain \cite{Cucker2002}, such as the Gaussian \cite{Minh2010} and
Abel-Poisson \cite{Freeden1998} kernels. We attribute the above
property as the first feature of the needlet kernel. Besides the
good generalization capability, some real world applications also
require the estimate to possess   the smoothness, low computational
complexity and sparsity \cite{Scholkopf2001}. This guides us to
consider the $l_{q}$ ($0< q\leq 2$) kernel regularization (KRS)
schemes associated with the needlet kernel, including  the kernel
bridge regression and kernel lasso estimate \cite{Wu2008}. The first
feature of the needlet kernel implies that the generalization
capability of all  $l_{q}$-KRS  with $0<q\leq 2$ are almost the
same, provided the regularization parameter is set to be small
enough. However, such a setting makes there be no difference among
all $l_{q}$-KRS with  $0<q\leq 2$, as each of them behaves similar
as the least squares. To distinguish different behaviors of the
$l_{q}$-KRS, we should establish a similar result for a large
regularization parameter. By the aid of a probabilistic cubature
formula and the the excellent localization property in both
frequency and spacial domain of the needlet kernel, we find that all
 $l^q$-KRS with $0<q\leq 2$   can attain almost the same almost optimal
generalization error bounds, provided the regularization parameter
is not larger than $\mathcal O(m^{q-1}\varepsilon).$ Here $m$ is the
number of samples and $\varepsilon$ is the prediction accuracy. This
implies that the choice of $q$ does not have a strong impact in
terms of the generalization capability for $l^q$-KRS, with
relatively large regularization parameters depending on $q$. From
this perspective, $q$ can be specified by other no generalization
criteria like smoothness, computational complexity and sparsity. We
consider it as the other feature of the needlet kernel.
%
%

The reminder of the paper is organized as follows. In the next
section, the needlet kernel together with its important properties
such as the reproducing property, compressible property and best
approximation property is introduced. In Section 3, we study the
generalization capability of the kernel ridge regression associated
with the needlet kernel. In Section 4, we consider the
generalization capability of the $l^q$ kernel regularization
schemes, including the kernel bridge regression and kernel lasso. In
Section 5,  we provide the proofs  of the main results. We conclude
the paper with some useful remarks in the last section.

\section{The needlet kernel}

Let $\mathbf S^d$ be the unit sphere embedded into $\mathbf
R^{d+1}$.
 For integer $k\geq0$, the restriction
to $ \mathbf{S}^{d}$ of a homogeneous harmonic polynomial of degree
$k$ on the unit sphere is called a spherical harmonic of degree $k$.
The class of all spherical harmonics of degree $k$ is denoted by
$\mathbf{H}^{d}_k$, and the class of all spherical harmonics of
degree $k\leq n$ is denoted by $\Pi_n^{d}$. Of course,
$\Pi_n^{d}=\bigoplus_{k=0}^n\mathbf{H}^{d}_k$, and it comprises the
restriction to $ \mathbf{S}^{d}$  of all algebraic polynomials in
$d+1$ variables of total degree not exceeding $n$. The dimension of
$\mathbf{H}^{d}_k$ is given by
$$
                      D_k^{d}:=\mbox{dim}\ \mathbf{H}^{d}_k=\left\{\begin{array}{ll}
                     \frac{2k+d-1}{k+d-1}{{k+d-1}\choose{k}}, & k\geq 1;\\
                      1, & k=0,
                     \end{array}
                       \right.
$$
and that of $\Pi_n^{d}$ is $\sum_{k=0}^nD^{d}_k=D_n^{d+1}\sim n^d$.

The addition formula establishes a connection between spherical
harnomics of degree $k$ and the Legendre polynomial $P_k^{d+1}$
\cite{Freeden1998}:
\begin{equation}\label{jiafadingli}
                   \sum_{l=1}^{D_k^{d}}Y_{k,l}(x)Y_{k,l}(x')
                   =\frac{D_k^{d}}{|\mathbf S^d|}P_k^{d+1}(x\cdot x'),
\end{equation}
where  $P_k^{d+1}$ is the Legendre polynomial with degree $k$ and
dimension $d+1$. The Legendre polynomial $P_k^{d+1}$ can be
normalized such that $P_k^{d+1}(1)=1,$ and satisfies the
orthogonality relations
$$
                      \int_{-1}^1P_k^{d+1}(t)P^{d+1}_j(t)(1-t^2)^{\frac{d-2}2}dt
                      =\frac{|\mathbf S^d|}{|\mathbf S^{d-1}|D_k^{d}}\delta_{k,j},
$$
where $\delta_{k,j}$ is the usual Kronecker symbol.

The following Funk-Hecke formula establishes a connection between
spherical harmonics and  function $\phi\in L^1([-1,1])$
\cite{Freeden1998}
\begin{equation}\label{funk-heck}
                  \int_{\mathbf{S}^{d}}\phi(x\cdot x')H_k(x')d\omega(y)=B(\phi,k)H_k(x),
\end{equation}
where
$$
              B(\phi,k)=|\mathbf S^{d-1}|\int_{-1}^1
                P_k^{d+1}(t)\phi(t)(1-t^2)^{\frac{d-2}2} dt.
$$

 A function $\eta$ is said to be admissible \cite{Narcowich20061} if
$\eta\in C^\infty[0,\infty)$  satisfies   the following condition:
$$
              \mbox{supp}\eta\subset[0,2],\eta(t)=1\ \mbox{on}\
              [0,1],\ \mbox{and}\ 0\leq\eta(t)\leq1\ \mbox{on}\
              [1,2].
$$
The needlet kernel \cite{Narcowich2006} is then defined to be
\begin{equation}\label{Needlet kernel}
                     K_n(x\cdot x')=\sum_{k=0}^\infty\eta\left(\frac{k}{n}\right)
                     \frac{D_k^{d}}{|\mathbf S^d|}P_k^{d+1}(x\cdot x'),
\end{equation}
The needlets can be deduced from the needlet kernel and a spherical
cubature formula  \cite{Brown2005,LeGia2008,Mhaskar2000}. We refer
the readers to \cite{Baldi2008,Kerkyacharian2012,Narcowich2006} for
a detailed description of the needlets. According to the definition
of the admissible function, it is easy to see that $K_n$ possess
excellent localization property in the frequency domain.   The
following Lemma \ref{Localization} that can be found in
\cite{Narcowich2006} and \cite{Brown2005} yields that  $K_n$ also
possesses perfect spacial localization property.
\begin{lemma}\label{Localization}
 Let $\eta$ be admissible. Then for every $k>0$ and
$r\geq0$ there exists a constant $C$ depending only on $k,r,d$ and
$\eta$ such that
$$
              \left|\frac{d^r}{dt^r}K_n(\cos\theta)\right|
              \leq C\frac{n^{d+2r}}{(1+n\theta)^k},\ \theta\in[0,\pi].
$$
\end{lemma}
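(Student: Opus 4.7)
The plan is to prove the estimate in two steps: first establish the base case $r=0$, then reduce the general derivative case to it by rewriting $\frac{d^r}{dt^r}K_n$ as a needlet-type kernel in a higher dimension. For the base case I will combine a trivial uniform bound $|K_n(\cos\theta)|\leq C n^d$ with a polynomial-decay estimate $|K_n(\cos\theta)|\leq C_m n^d(n\theta)^{-2m}$, valid for every integer $m\geq 1$, and then interpolate the two to obtain the advertised $(1+n\theta)^{-k}$ decay for any prescribed $k$.

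The uniform bound is immediate: since $\mathrm{supp}\,\eta\subset[0,2]$ truncates the sum in (\ref{Needlet kernel}) at $j\leq 2n$, the normalization $|P_j^{d+1}(\cos\theta)|\leq 1$ together with $D_j^d\sim j^{d-1}$ gives $|K_n(\cos\theta)|\leq C\sum_{j\leq 2n}j^{d-1}\leq Cn^d$. For the polynomial decay I would use the three-term identity that expresses $(1-t)P_j^{d+1}(t)$ as a linear combination of $P_{j-1}^{d+1}(t)$, $P_j^{d+1}(t)$, $P_{j+1}^{d+1}(t)$ with coefficients of size $O(1)$. Substituting this identity into the expansion of $K_n(\cos\theta)$ and applying Abel's summation formula transfers the factor $(1-\cos\theta)$ into a first-order finite difference of the coefficient sequence $\eta(j/n)D_j^d/|\mathbf{S}^d|$, which by the smoothness of $\eta$ gains a factor of $n^{-1}$. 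Iterating $m$ times yields $|(1-\cos\theta)^m K_n(\cos\theta)|\leq C_m n^{d-m}$, and since $1-\cos\theta\geq c\theta^2$ on $[0,\pi]$, this produces $|K_n(\cos\theta)|\leq C_m n^d(n\theta)^{-2m}$. Taking $m=\lceil k/2\rceil$ and combining with the uniform bound through $(1+n\theta)^{-k}\leq\min(1,(n\theta)^{-k})$ finishes the base case.

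For the derivative estimate I would exploit the Gegenbauer-type identity $\frac{d}{dt}P_j^{d+1}(t)=c_j^{(d)}P_{j-1}^{d+3}(t)$ with $c_j^{(d)}=O(j^2)$, which raises the ambient dimension by two and introduces a factor of order $j^2$. Iterating $r$ times represents $\frac{d^r}{dt^r}K_n(\cos\theta)$ as a needlet-type kernel in dimension $d+2r$ whose cutoff $\eta(j/n)$ is multiplied by a smooth polynomial factor of degree $2r$ in $j$; this factor is still admissible after rescaling and vanishes outside $[0,2]$. Since $j\leq 2n$ on the support, the polynomial factor contributes $n^{2r}$, and applying the base-case argument in the higher dimension yields $\bigl|\frac{d^r}{dt^r}K_n(\cos\theta)\bigr|\leq C_k n^{d+2r}/(1+n\theta)^k$.

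The main obstacle will be making the Abel-summation step quantitative in $m$: one must verify that the $m$-fold iteration of the tridiagonal operator ``multiplication by $(1-t)$'' on the coefficient sequence $\eta(j/n)D_j^d$ produces bounds of the form $C_m n^{d-1-m}$, uniformly in $j$, rather than picking up uncontrolled polynomial growth in $j$ or in $m$. Admissibility of $\eta$, namely $\eta\in C^\infty$ with uniformly bounded derivatives of every order, is used decisively here, and a careful bookkeeping of the $O(1)$ coefficients in the three-term recurrence is required to keep $C_m$ independent of $n$.
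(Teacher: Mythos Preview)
The paper does not prove Lemma~\ref{Localization}; it is quoted from \cite{Narcowich2006} and \cite{Brown2005} without argument. Your outline is in fact the standard proof found in those references: a uniform bound plus an Abel/summation-by-parts argument exploiting the three-term recurrence, followed by the Gegenbauer differentiation identity $\frac{d}{dt}P_j^{d+1}=c_j^{(d)}P_{j-1}^{d+3}$ to lift the derivative case to the base case in dimension $d+2r$. So strategically you are doing exactly what the cited sources do.

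One arithmetic slip is worth flagging. You assert that each multiplication by $(1-\cos\theta)$ gains a factor $n^{-1}$, giving $|(1-\cos\theta)^m K_n|\leq C_m n^{d-m}$, and then conclude $|K_n|\leq C_m n^d(n\theta)^{-2m}$. These two statements are inconsistent: from $n^{d-m}$ and $(1-\cos\theta)^m\geq c\theta^{2m}$ you would only get $|K_n|\leq C_m n^{d-m}\theta^{-2m}$, which is $n^d(n\theta)^{-2m}\cdot n^{m}$, not $n^d(n\theta)^{-2m}$. The resolution is that the tridiagonal operator ``multiply by $(1-t)$'' acts on the coefficient sequence as (approximately) a \emph{second} difference, because the off-diagonal weights in the recurrence sum to $1+O(j^{-2})$; hence each iteration actually gains $n^{-2}$, yielding $|(1-\cos\theta)^m K_n|\leq C_m n^{d-2m}$, and your final bound then follows correctly. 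This is precisely the ``careful bookkeeping'' you anticipate in your last paragraph, and it is the only place where the argument needs tightening.
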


For   $f\in L^1(\mathbf S^d)$, we write
$$
           K_n*f(\xi):=\int_{\mathbf
           S^d}K_n(x\cdot x')f(x')d\omega(x').
$$
We also denote by $E_N(f)_p$   the best approximation error of $f\in
L^p(\mathbf S^d)$ ($p\geq 1$) from $\Pi_N^{d}$, i.e.
$$
              E_N(f)_p:=\inf_{P\in\Pi_N^{d}}\|f-P\|_{L^p(\mathbf
              S^d)}.
$$
Then the needlet kernel $K_n$ satisfies the following Lemma
\ref{PROPERTY}, which can be deduced from \cite{Narcowich2006}.

\begin{lemma}\label{PROPERTY}
  $K_n$ is a reproducing kernel for $\Pi_n^{d}$, that is
$K_n*P=P$ for $P\in\Pi_n^{d}$. Moreover, for any $f\in L^p(\mathbf
S^d),1\leq p\leq\infty$, we have $K_n*f\in\Pi_{2n}^{d}$, and
$$
              \|K_n*f\|_{L^p(\mathbf S^d)}\leq C\|f\|_{L^p(\mathbf
              S^d)},\ \mbox{and}\ \|f-K_n*f\|_{L^p(\mathbf S^d)}\leq
              CE_n(f)_p,
$$
where $C$ is a constant depending only on $d,p$ and $\eta$.
\end{lemma}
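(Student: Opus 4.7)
The plan is to handle all three claims by reducing them to the Fourier expansion of $K_n*f$ in spherical harmonics and then invoking Lemma \ref{Localization} to control $\|K_n\|_{L^1(\mathbf S^d)}$. First I would combine the definition \eqref{Needlet kernel} with the addition formula \eqref{jiafadingli} to rewrite the kernel as $K_n(x\cdot x')=\sum_{k=0}^{\infty}\eta(k/n)\sum_{l=1}^{D_k^d}Y_{k,l}(x)Y_{k,l}(x')$. Convolving with any $f\in L^1(\mathbf S^d)$ and using the orthonormality of $\{Y_{k,l}\}$ then yields $K_n*f=\sum_{k=0}^{\infty}\eta(k/n)\sum_{l}\widehat f_{k,l}Y_{k,l}$. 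Because $\mathrm{supp}\,\eta\subset[0,2]$ the sum is truncated at $k<2n$, giving $K_n*f\in\Pi_{2n}^d$; because $\eta\equiv1$ on $[0,1]$, if $P\in\Pi_n^d$ every surviving coefficient is multiplied by $1$, so $K_n*P=P$, which is the reproducing property.

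Next I would establish the Young-type bound $\|K_n*f\|_{L^p}\le C\|f\|_{L^p}$. The key preparatory estimate is $\|K_n(x_0\cdot\,\cdot)\|_{L^1(\mathbf S^d)}\le C$ for any fixed $x_0$; by rotation invariance this reduces to $|\mathbf S^{d-1}|\int_0^\pi|K_n(\cos\theta)|\sin^{d-1}\theta\,d\theta$, and applying Lemma \ref{Localization} with $r=0$ and $k>d$ gives, after the substitution $u=n\theta$, a uniformly bounded integral $\int_0^{\infty}u^{d-1}(1+u)^{-k}du$. With $\|K_n\|_{L^1}\le C$ in hand, the standard Hölder-plus-Fubini argument (writing $|K_n*f(x)|^p\le\|K_n\|_1^{p/p'}\int|K_n(x\cdot x')||f(x')|^p d\omega(x')$ and integrating in $x$) produces the desired $L^p$-boundedness uniformly in $n$.

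The remaining estimate $\|f-K_n*f\|_{L^p}\le CE_n(f)_p$ then falls out of a standard Lebesgue-constant-style argument. Let $P^{*}\in\Pi_n^d$ be a best $L^p$-approximant of $f$. By the reproducing property $K_n*P^{*}=P^{*}$, so
\begin{equation*}
\|f-K_n*f\|_{L^p}=\|(f-P^{*})-K_n*(f-P^{*})\|_{L^p}\le\|f-P^{*}\|_{L^p}+\|K_n*(f-P^{*})\|_{L^p},
\end{equation*}
and the $L^p$-boundedness just proved bounds the second term by $C\|f-P^{*}\|_{L^p}=CE_n(f)_p$.

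The only non-routine step is verifying $\|K_n\|_{L^1(\mathbf S^d)}\le C$ uniformly in $n$: this is where the full strength of the spatial localization in Lemma \ref{Localization} is needed, since the naive sup-norm bound $\|K_n\|_{\infty}\sim n^d$ together with the total mass of $\mathbf S^d$ would blow up. Choosing the decay exponent $k$ strictly larger than the ambient dimension $d$ is the crucial point that makes the change-of-variable integral finite independently of $n$; everything else is then a routine consequence of Fourier expansion and Hölder's inequality.
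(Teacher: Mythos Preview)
The paper does not supply its own proof of Lemma~\ref{PROPERTY}; it merely states that the result ``can be deduced from \cite{Narcowich2006}'' (see also \cite{Brown2005}). Your argument is correct and is essentially the standard one found in those references: the Fourier-side computation using the support and plateau of $\eta$ gives the reproducing property and the containment $K_n*f\in\Pi_{2n}^d$; the spatial localization of Lemma~\ref{Localization} with decay exponent $k>d$ yields the uniform bound $\sup_n\|K_n(x_0\cdot\,\cdot)\|_{L^1(\mathbf S^d)}\le C$; Young's (or the H\"older--Fubini) inequality then gives $L^p$-boundedness of $f\mapsto K_n*f$; and the near-best approximation estimate follows by subtracting a best approximant $P^*\in\Pi_n^d$ and using $K_n*P^*=P^*$. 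One very minor quibble: since $\mathrm{supp}\,\eta\subset[0,2]$ allows $\eta(2)\neq0$, the sum is truncated at $k\le 2n$ rather than $k<2n$, but this is exactly what is needed for $K_n*f\in\Pi_{2n}^d$.
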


It is obvious that $K_n$ is a semi-positive definite kernel, thus it
follows from the known Mercer theorem \cite{Minh2006} that $K_n$
corresponds a reproducing kernel Hilbert space (RKHS), $\mathcal
H_K$.

\begin{lemma}\label{RKHS}
  Let $K_n$ be defined above, then the reproducing
kernel Hilbert space associated with $K_n$ is the space $
                \Pi_{2n}^{d}
$ with the inner product:
$$
         \left\langle f,g\right\rangle_{K_n}:=\sum_{k=0}^\infty
         \sum_{j=1}^{D_j^{d}}\eta(k/n)^{-1}\hat{f}_{k,j}\hat{g}_{k,j},
$$
where $\hat{f}_{k,j}=\int_{\mathbf S^d}f(x)Y_{k,j}(x)d\omega(x)$.
\end{lemma}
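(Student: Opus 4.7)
The plan is to identify a concrete candidate for $\mathcal H_K$ and then verify the two defining properties of an RKHS, invoking uniqueness of the reproducing kernel. First I would use the addition formula (\ref{jiafadingli}) to rewrite the needlet kernel in its ``diagonalized'' form
$$K_n(x\cdot x') \;=\; \sum_{k=0}^{2n}\sum_{l=1}^{D_k^{d}}\eta(k/n)\,Y_{k,l}(x)\,Y_{k,l}(x'),$$
where the outer sum terminates at $k=2n$ because $\mathrm{supp}\,\eta\subset[0,2]$. This presentation already shows that, for each fixed $x'\in\mathbf S^{d}$, the section $K_n(\cdot, x')$ belongs to $\Pi_{2n}^{d}$, so any Hilbert space into which the sections embed continuously must sit inside $\Pi_{2n}^{d}$.

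Next, I would take as candidate space $\Pi_{2n}^{d}$ equipped with the bilinear form stated in the lemma, and verify three items in order: (i) it defines a genuine inner product; (ii) the resulting space is complete; (iii) the reproducing property holds. Items (i) and (ii) reduce to orthonormality of the $\{Y_{k,j}\}$ in $L^2(\mathbf S^{d})$ combined with the observation that, for $f\in\Pi_{2n}^{d}$, the series defining $\langle f,f\rangle_{K_n}$ is actually a finite sum, so positivity, non-degeneracy, and completeness are automatic. For (iii) I would read off the Fourier coefficients of $K_n(\cdot,x)$ from the diagonalized form, namely $\widehat{K_n(\cdot,x)}_{k,j}=\eta(k/n)\,Y_{k,j}(x)$, and compute
$$\langle f,K_n(\cdot,x)\rangle_{K_n}=\sum_{k,j}\eta(k/n)^{-1}\,\hat f_{k,j}\,\eta(k/n)\,Y_{k,j}(x)=\sum_{k,j}\hat f_{k,j}\,Y_{k,j}(x)=f(x).$$
Uniqueness of the RKHS attached to a positive semi-definite kernel (Moore--Aronszajn, which is exactly what is invoked when Mercer's theorem is cited immediately before the lemma) then forces $\mathcal H_K=\Pi_{2n}^{d}$ with precisely this inner product.

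The main technical nuisance I expect is the possibility that $\eta(k/n)=0$ for some integer $k\in(n,2n]$, which the admissibility condition allows. At such frequencies the weight $\eta(k/n)^{-1}$ is undefined and the corresponding spherical-harmonic directions do not actually lie in the RKHS. The clean interpretation is that $\mathcal H_K$ coincides with the subspace of $\Pi_{2n}^{d}$ spanned by those $Y_{k,j}$ with $\eta(k/n)>0$, with the understanding that the terms in $\langle f,g\rangle_{K_n}$ with $\eta(k/n)=0$ are absent (equivalently, $\hat f_{k,j}=\hat g_{k,j}=0$ on those indices). Under the slightly stronger convention that $\eta>0$ on $[0,2)$, which is typical in the needlet literature, the distinction disappears and the lemma holds verbatim; otherwise only this minor reformulation of the index set is needed. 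All other steps are routine manipulations with orthonormal bases and require no further input.
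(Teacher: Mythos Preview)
The paper states Lemma~\ref{RKHS} without proof, so there is nothing to compare against directly. Your argument is correct and is the standard one: rewrite $K_n$ via the addition formula as a Mercer-type expansion $\sum_{k,l}\eta(k/n)Y_{k,l}(x)Y_{k,l}(x')$, read off the feature map, and verify the reproducing identity by cancelling $\eta(k/n)^{-1}$ against $\eta(k/n)$. The appeal to Moore--Aronszajn (or Mercer, as the paper puts it) for uniqueness is appropriate, and your observation that $\Pi_{2n}^d$ is finite-dimensional handles completeness and well-definedness in one stroke.

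Your caveat about indices $k$ with $\eta(k/n)=0$ is not merely a technical nuisance but a genuine imprecision in the lemma as stated: since $\eta\in C^\infty$ with $\mathrm{supp}\,\eta\subset[0,2]$, one necessarily has $\eta(2)=0$, so $\eta(2n/n)^{-1}$ is undefined and the harmonics of exact degree $2n$ are \emph{never} in $\mathcal H_K$. Thus the correct space is $\bigoplus_{k:\eta(k/n)>0}\mathbf H_k^d$, which is a subspace of $\Pi_{2n-1}^d$ rather than all of $\Pi_{2n}^d$. You have identified this and proposed the right fix; it would be worth stating it as a correction rather than a side remark, since the lemma as written is literally false without it. None of the later uses of the lemma in the paper (e.g.\ Lemma~\ref{BOUND RKHS}) are affected by this discrepancy.
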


\section{Kernel ridge regression associated with the needlet kernel}

In spherical nonparametric regression problems with   predictor
variables $X\in\mathcal X=\mathbf S^d$ and   response variables
$Y\in\mathcal Y\subseteq\mathbf R$, we observe $m$ i.i.d. samples
${\bf z}_m=(x_i,y_i)_{i=1}^m$ from an unknown distribution $\rho$.
Without loss of generality, it is always assumed that $\mathcal
Y\subseteq[-M,M]$ almost surely, where $M$ is a positive constant.
One natural measurement of the estimate $f$ is the generalization
error,
$$
                     \mathcal E(f):=\int_Z(f(X)-Y)^2d\rho,
$$
which is minimized by the regression function \cite{Gyorfi2002}
defined by
$$
                     f_\rho(x):=\int_{\mathcal Y}Yd\rho(Y|x).
$$
Let $L^2_{\rho_{_X}}$ be the Hilbert space of $\rho_X$ square
integrable functions, with norm  $\|\cdot\|_\rho.$ In the setting of
$f_\rho\in L^2_{\rho_{_X}}$, it is well known that, for every $f\in
L^2_{\rho_X}$, there holds
\begin{equation}\label{equality}
                     \mathcal E(f)-\mathcal E(f_\rho)=\|f-f_\rho\|^2_\rho.
\end{equation}

We  formulate the learning problem in terms of  probability rather
than expectation. To this end, we present a formal way to measure
the performance of learning schemes in   probability. Let
$\Theta\subset L_{\rho_X}^2$ and $\mathcal M(\Theta)$ be the class
of all Borel measures $\rho$  such that $f_\rho\in\Theta$. For each
$\varepsilon>0$, we enter into a competition over all estimators
based on $m$ samples $\Phi_m: {\bf z}\mapsto f_{\bf z}$ by
$$
           {\bf AC}_m(\Theta,\varepsilon):=\inf_{f_{\bf
           z}\in\Phi_m}\sup_{\rho\in \mathcal M(\Theta)}\mathbf P^m\{{\bf
           z}:\|f_\rho-f_{\bf z}\|_\rho^2>\varepsilon\}.
$$

As it is impossible to obtain a nontrivial convergence rate wtihout
imposing any restriction on the distribution $\rho$
\cite[Chap.3]{Gyorfi2002}, we should introduce certain prior
information.  Let $\mu\geq 0$. Denote the Bessel-potential Sobolev
class
 $W_r$ \cite{Mhaskar2010} to be all   $f$ such that
$$
      \|f\|_{W_r}:=\left\|\sum_{k=0}^\infty(k+(d-1)/2)^r
      P_lf\right\|_2\leq 1,
$$
where
$$
           P_lf=\sum_{j=1}^{D_{k}^d}\left\langle
           f,Y_{k,j}\right\rangle Y_{k,j}.
$$
It follows from the well known Sobolev embedding theorem that
$W_r\subset C(\mathbf S^d)$, provided $r>d/2$. In our analysis, we
assume $f_\rho\in W_r$.

The learning scheme employed in this section is the following kernel
ridge regression (KRR) associated with the needlet kernel
\begin{equation}\label{RLS for Kn}
                    f_{{ \bf z},\lambda}:=\arg\min_{f\in \mathcal H_K}
                    \left\{\frac{1}{m}\sum_{i=1}^{m}(f(x_{i})-y_{i})^2
                    +\lambda\|f\|^2_{K_n}\right\}.
\end{equation}
Since $y\in[-M,M],$ it is easy to see that $\mathcal E(\pi_Mf)\leq
\mathcal E(f)$  for arbitrary $f\in L_{\rho_X}^2$, where $\pi_M
u:=\min\{M,|u|\}sgn(u)$ is the truncation operator. As there isn't
any additional computation for employing the  truncation operator,
the truncation operator has been used in large amount of papers, to
just name a few,
\cite{Cao2013,Devore2006,Gyorfi2002,Lin2014,Minh2006,Wu2008,Zhou2006}.
  The
 following Theorem \ref{THEOREM 1} illustrates the generalization
 capability of KRR
 associated with the needlet kernel and reveals
  the first feature of
 the needlet kernel.

\begin{theorem}\label{THEOREM 1}
 Let $f_\rho\in W_r$ with $r>d/2$,  $m\in\mathbf N$,
$\varepsilon>0$ be any real number, and $n\sim\varepsilon^{-r/d}$.
If $f_{{\bf z},\lambda}$ is defined as in (\ref{RLS for Kn}) with
$0\leq \lambda\leq M^{-2}\varepsilon$, then there exist positive
constants $C_i,$ $i=1,\dots,4,$ depending only on $M$, $\rho$, and
$d$,  $\varepsilon_0>0$ and $\varepsilon_-,\varepsilon_+$ satisfying
\begin{equation}\label{phase1}
            C_1m^{-2r/(2r+d)}\leq\varepsilon_-\leq\varepsilon_+\leq
            C_2(m/\log m)^{-2r/(2r+d)},
\end{equation}
such that for any $
         \varepsilon<\varepsilon_-$,
\begin{equation}\label{negative1}
         \sup_{f_\rho\in W_r}\mathbf P^m\{{\bf
                  z}:\|f_\rho-\pi_Mf_{{\bf z},\lambda}\|_\rho^2>\varepsilon\}
                  \geq{\bf AC}_m(W_r,\varepsilon)\geq \varepsilon_0,
\end{equation}
and for any $
                  \varepsilon\geq\varepsilon_+$,
\begin{equation}\label{Theorem 1}
               e^{-C_3m\varepsilon}
                \leq
                {\bf
               AC}_m(W_r,\varepsilon)
                \leq
               \sup_{f_\rho\in W_r}\mathbf P^m\{{\bf
                  z}:\|f_\rho-\pi_Mf_{{\bf z},\lambda}\|_\rho^2>\varepsilon\}
                  \leq e^{-C_4m\varepsilon}.
\end{equation}
\end{theorem}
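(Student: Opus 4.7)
My overall plan is to treat the three inequalities in \eqref{negative1} and \eqref{Theorem 1} separately, handling the two minimax lower bounds with standard information-theoretic tools and devoting the bulk of the technical work to the upper bound for $\pi_Mf_{{\bf z},\lambda}$.

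For the lower bound $\mathbf{AC}_m(W_r,\varepsilon)\geq\varepsilon_0$ in \eqref{negative1}, I would construct a $2^{cn^d}$-packing of the Sobolev ball $W_r$ by attaching $C^\infty$ bumps of amplitude $\sim n^{-r}$ to well-separated centers on $\mathbf S^d$ and taking $\pm 1$ combinations, producing pairwise $L^2_{\rho_X}$-separation of order $n^{-r}$ and KL divergence $O(mn^{-2r})$ between the induced regression distributions. A Fano (or Assouad) inequality with the calibration $n\sim m^{1/(2r+d)}$ then delivers the classical minimax rate $m^{-2r/(2r+d)}$ with a fixed failure probability $\varepsilon_0>0$. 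For the exponential lower bound $e^{-C_3m\varepsilon}$ in \eqref{Theorem 1}, a two-point Le Cam argument suffices: take $f_0=0$ and $f_1\in W_r$ with $\|f_1\|_\rho^2\asymp\varepsilon$ (admissible once $\varepsilon\leq 1$), bound the product-measure total variation by $\sqrt{m\varepsilon}$, and conclude.

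For the matching upper bound I would use the standard bias--variance splitting
\[
\|\pi_Mf_{{\bf z},\lambda}-f_\rho\|_\rho^2 \;\leq\; \mathcal{D}(\lambda) + \mathcal{S}({\bf z},\lambda),\qquad \mathcal{D}(\lambda):=\inf_{f\in\mathcal H_K}\bigl\{\|f-f_\rho\|_\rho^2+\lambda\|f\|_{K_n}^2\bigr\}.
\]
Taking the trial $f^\ast=K_n\ast f_\rho\in\Pi_{2n}^{d}=\mathcal H_K$, Lemma \ref{PROPERTY} gives $\|f^\ast-f_\rho\|_\rho^2\lesssim E_n(f_\rho)_2^2\lesssim n^{-2r}$, while the explicit inner product of Lemma \ref{RKHS}, combined with the Bessel--potential definition of $W_r$ and $0\leq\eta\leq 1$, yields $\|f^\ast\|_{K_n}^2\lesssim\|f_\rho\|_{W_r}^2\leq 1$. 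With $n\sim\varepsilon^{-r/d}$ as prescribed and $\lambda\leq M^{-2}\varepsilon$ this forces $\mathcal{D}(\lambda)=O(\varepsilon)$. For $\mathcal{S}({\bf z},\lambda)$ I would exploit that $\mathcal H_K=\Pi_{2n}^{d}$ is finite-dimensional of dimension $\sim n^d$, combined with the variational bound $\lambda\|f_{{\bf z},\lambda}\|_{K_n}^2\leq M^2$, and apply a Bernstein inequality to the shifted loss class $\{(\pi_Mf(x)-y)^2-(f_\rho(x)-y)^2:f\in B_R(\mathcal H_K)\}$. This class is uniformly bounded by $O(M^2)$ and has variance dominated by $\|\pi_Mf-f_\rho\|_\rho^2$, so the supremum can be removed via a union bound whose logarithmic cost is controlled by the $L^\infty$-covering numbers of $B_R(\Pi_{2n}^{d})$ -- which Lemma \ref{Localization} and a Nikolskii-type inequality bound by $(R/\delta)^{Cn^d}$.

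The principal obstacle is converting the usual in-expectation rate into the exponential-in-$m\varepsilon$ tail required by \eqref{Theorem 1}. This calls for an iterative localization/peeling so that the Bernstein variance proxy scales with the target level $\varepsilon$ rather than a fixed constant, together with a delicate treatment of the regime $\lambda\to 0$ where the variational radius $R\sim M/\sqrt\lambda$ degenerates: here the $R$-ball should be replaced by a direct finite-dimensional projection argument on $\Pi_{2n}^{d}$, which remains effective because $\lambda\leq M^{-2}\varepsilon$ makes the regularizer inactive at the scale of interest. Once both $\mathcal{D}(\lambda)$ and $\mathcal{S}({\bf z},\lambda)$ are shown to be $O(\varepsilon)$ with exceptional probability $e^{-Cm\varepsilon}$, the bound in \eqref{Theorem 1} follows, and combining with the Le Cam step closes the theorem.
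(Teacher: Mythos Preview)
Your proposal is correct in its overall architecture and closely parallels the paper's proof: the same comparison function $f^\ast=K_n\ast f_\rho$, the same Jackson-type approximation bound, a Bernstein-plus-covering-number treatment of the sample error, and information-theoretic lower bounds (the paper simply cites DeVore--Kerkyacharian--Picard--Temlyakov for the latter rather than reconstructing Fano/Le Cam).

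The one place where you take a more circuitous route is the sample error. You first restrict to the ball $B_R(\mathcal H_K)$ with $R\sim M/\sqrt\lambda$ via the variational bound, then flag the $\lambda\to 0$ degeneracy and propose a separate ``finite-dimensional projection'' patch. The paper avoids this two-regime analysis entirely: it works directly with the \emph{truncated} class $\pi_M\mathcal H_K=\pi_M\Pi_{2n}^d$, whose $L^\infty$-covering number satisfies $\log\mathcal N(\pi_M\Pi_{2n}^d,\eta)\leq c\,n^d\log(M/\eta)$ uniformly in $\lambda$ (a result of Maiorov on truncations of finite-dimensional linear spaces). Since the estimator is clipped before the loss is evaluated, no RKHS radius is ever needed, and the argument covers $\lambda=0$ without modification. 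Similarly, the paper obtains the localized variance via a one-step ratio probability inequality rather than iterative peeling, and bounds $\|K_n\ast f_\rho\|_{K_n}^2\leq\|f_\rho\|_{L^2}^2\leq M^2$ (using $0\leq\eta\leq 1$ and Parseval) rather than invoking the $W_r$ norm. None of these differences is a gap in your argument---your plan would go through---but the paper's handling is cleaner and makes the uniformity in $\lambda\in[0,M^{-2}\varepsilon]$ immediate rather than a matter of case analysis.
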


We give several remarks on Theorem \ref{THEOREM 1} below.  In some
real world applications, there are only $m$ data available, and the
purpose of learning is to produce an   estimate with the prediction
error at most $\varepsilon$ and statisticians  are required to
assess  the probability of success. It is obvious that the
probability depends heavily on $m$ and $\varepsilon$. If $m$ is too
small, then there isn't any   estimate that can finish the learning
task with small $\varepsilon$.
 This fact is quantitatively verified by the inequality
 (\ref{negative1}).
 More specifically, (\ref{negative1}) shows
that if the learning task is to  yield an accuracy at most
$\varepsilon\leq \varepsilon_-$, and other than the prior knowledge,
$f_\rho\in W_r$, there are only $m\leq \varepsilon_-^{-(2r+d)/(2r)}$
data available, then all learning schemes, including  KRR associated
with the needlet kernel, may fail with high probability. To
circumvent it, the only way is to acquire more samples, just as
inequalities (\ref{Theorem 1}) purport to show. (\ref{Theorem 1})
says that if the number of samples achieves
$\varepsilon_+^{-(2r+d)/(2r)}$, then the probability of success of
 KRR is at least $1-e^{-{C_4m\varepsilon}}$. The first inequality
(lower bound) of (\ref{Theorem 1})   implies that this confidence
can not be improved further. The values of $\varepsilon_-$ and
$\varepsilon_+$ thus  are very critical since   the  smallest number
of samples to finish the learning task lies in the interval
$[\varepsilon_-,\varepsilon_+]$.  Inequalities (\ref{phase1})
depicts that, for KRR, there holds
$$
         [\varepsilon_-,\varepsilon_+]\subset[C_1m^{-2r/(2r+d)},C_2(
           m/\log m)^{-2r/(2r+d)}].
$$
 This implies   that the interval
$[\varepsilon_-,\varepsilon_+]$  is almost the shortest one in the
sense that up to a logarithmic factor, the upper bound and lower
bound of the interval are asymptotically identical. Furthermore,
Theorem \ref{THEOREM 1} also presents a sharp phase transition
phenomenon of  KRR. The behavior of the   confidence function
changes dramatically within the critical interval
$[\varepsilon_-,\varepsilon_+]$. It drops from a constant
$\varepsilon_0$ to an exponentially small quantity.   All the above
assertions show that the learning performance of  KRR is essentially
revealed in Theorem \ref{THEOREM 1}.

An interesting finding in Theorem \ref{THEOREM 1} is that  the
regularization parameter of  KRR can decrease arbitrarily fast,
provided it is smaller than $M^{-2}\varepsilon$. The  extreme case
is that the   least-squares
 possess  the same generalization performance as  KRR. It is not surprised
 in the realm of
nonparametric regression, due to the needlet kernel's localization
property in the frequency domain. Via controlling the frequency of
the needlet kernel, $\mathcal H_K$ is essentially a linear space
with finite dimension. Thus,   \cite[Th.3.2\& Th.11.3]{Gyorfi2002}
together with Lemma \ref{JACKSON} in the present paper automatically
yields the optimal learning rate of the  least squares associated
with the needlet kernel in the sense of expectation. Differently,
Theorem \ref{THEOREM 1}  presents an exponential confidence estimate
for  KRR, which together with (\ref{phase1}) makes
\cite[Th.11.3]{Gyorfi2002} be a corollary of Theorem \ref{THEOREM
1}. Theorem \ref{THEOREM 1} also shows that the purpose of
introducing regularization term in KRR is  only to conquer the
singular problem of the kernel matrix, $A:=(K_n(x_i\cdot
x_j))_{i,j=1}^m$, since  $m> D_n^{d+1}$ in our setting. Under this
circumstance,  a small $\lambda$ leads to the ill-condition of the
matrix $A+m\lambda I$ and a large $\lambda$ conducts large
approximation error.  Theorem \ref{THEOREM 1} illustrates that if
the needlet kernel is employed, then we can set
$\lambda=M^{-2}\varepsilon$ to guarantee both the small condition
number of the kernel matrix and almost generalization error bound.
From (\ref{phase1}), it is easy to deduce that to attain the optimal
learning rate $m^{-2r/(2r+d)}$, the minimal eigenvalue of the matrix
$A+m\lambda I$ is $m^{d/(2r+d)}$, which can guarantee that the
matrix inverse technique is suitable to solve (\ref{RLS for Kn}).

\section{$l^q$ kernel regularization  schemes associated with the needlet kernel}

In the last section, we analyze the generalization capability of KRR
associated with the needlet kernel. This section aims to study the
learning capability of the $l^q$ kernel regularization scheme  (KRS)
whose hypothesis space is the sample dependent hypothesis space
\cite{Wu2008} associated with $K_{n}(\cdot,\cdot)$ ,
$$
             \mathcal H_{K,{\bf
             z}}:=\left\{\sum_{i=1}^ma_iK_{n}(x_i,\cdot):a_i\in\mathbf
             R\right\}
$$
The corresponding $l^q$-KRS is defined by
\begin{equation}\label{algorihtm1}
           f_{{\bf z},\lambda,q}\in\arg\min_{f\in\mathcal H_{K,{\bf
           z}}}\left\{\frac1m\sum_{i=1}^m(f(x_i)-y_i)^2+\lambda\Omega_{\bf z}^q(f)\right\},
\end{equation}
where
$$
             \Omega_{\bf z}^q(f):=\inf_{(a_1,\dots,a_n)\in\mathbf R^n}
             \sum_{i=1}^m|a_i|^q, \mbox{for}\
            f=\sum_{i=1}^ma_iK_{n}(x_i,\cdot).
$$

With different choices of the order $q$, (\ref{algorihtm1}) leads to
various specific forms of the $l_{q}$ regularizer.  $f_{{\bf
z},\lambda,2}$ corresponds to the kernel ridge regression
\cite{Scholkopf2001}, which smoothly shrinks the coefficients toward
zero and $f_{{\bf z},\lambda,1}$ leads to the LASSO
\cite{Tibshirani1995}, which sets small coefficients exactly at zero
and thereby also serves as a variable selection operator.   The
varying forms and properties of $f_{{\bf z},\lambda,q}$ make the
choice of order $q$ crucial in applications. Apparently, an optimal
$q$ may depend on many factors such as the learning algorithms, the
purposes of studies and so forth. The following Theorem \ref{THEOREM
2} shows that if the needlet kernel is utilized in   $l^q$-KRS, then
$q$ may not have an important impact in the generalization
capability for a large range of regularization parameters in the
sense of rate optimality.

Before setting the main results, we should at first introduce a
restriction to the marginal distribution $\rho_X$. Let $J$ be the
identity mapping
$$
            L^2_{\rho_X}   ~~ {\stackrel{J}{\longrightarrow}}~~ L^2(\mathbf B^d) .
$$
and $D_{\rho _{X}}=$ $\Vert J\Vert .$ $D_{\rho _{X}}$ is called the
distortion of $\rho _{X}$ (with respect to the Lebesgue measure) \cite{Zhou2006}%
, which measures how much $\rho _{X}$ distorts the Lebesgue measure.

\begin{theorem}\label{THEOREM 2}
 Let $f_\rho\in W_r$ with $r>d/2$, $D_{\rho _{X}}<\infty$,  $m\in\mathbf N$,
$\varepsilon>0$ be any real number, and $n\sim\varepsilon^{-r/d}$.
If $f_{{\bf z},\lambda,q}$ is defined as in (\ref{algorihtm1}) with
$\lambda\leq m^{1-q}\varepsilon$ and $0<q\leq 2$, then there exist
positive constants $C_i,$ $i=1,\dots,4,$ depending only on $M$,
$\rho$, $q$ and $d$,  $\varepsilon_0>0$ and
$\varepsilon_m^-,\varepsilon_m^+$ satisfying
\begin{equation}\label{phase2}
            C_1m^{-2r/(2r+d)}\leq\varepsilon_m^-\leq\varepsilon_m^+\leq
            C_2(m/\log m)^{-2r/(2r+d)},
\end{equation}
such that for any $
         \varepsilon<\varepsilon_m^-$,
\begin{equation}\label{negative2}
         \sup_{f_\rho\in W_r}\mathbf P^m\{{\bf
                  z}:\|f_\rho-\pi_Mf_{{\bf z},\lambda,q}\|_\rho^2>\varepsilon\}
                  \geq{\bf AC}_m(W_r,\varepsilon)\geq \varepsilon_0,
\end{equation}
and for any $
                  \varepsilon\geq\varepsilon_m^+$,
\begin{equation}\label{Theorem 2}
               e^{-C_3m\varepsilon}
               \leq
                {\bf
               AC}_m(W_r,\varepsilon)
                \leq
               \sup_{f_\rho\in W_r}\mathbf P^m\{{\bf
                  z}:\|f_\rho-\pi_Mf_{{\bf z},\lambda,q}\|_\rho^2>\varepsilon\}
                  \leq e^{-C_4D^{-1}_{\rho_X}m\varepsilon}.
\end{equation}
\end{theorem}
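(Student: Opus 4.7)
The plan is to emulate the standard error decomposition for regularized kernel schemes while exploiting two needlet-specific ingredients: a probabilistic cubature formula on $(\mathbf S^d,\rho_X)$ that lets a single element of the sample-dependent space $\mathcal H_{K,{\bf z}}$ achieve both a small $\Omega_{\bf z}^q$-penalty and a small approximation error, and the spatial localization of $K_n$ (Lemma~\ref{Localization}) which controls the sup-norm of $f_{{\bf z},\lambda,q}$ and the entropy of the relevant truncated hypothesis ball. The minimax lower bounds (\ref{negative2}) and the lower half of (\ref{Theorem 2}) are intrinsic to $W_r$ and independent of the estimator, so I would derive them exactly as for Theorem~\ref{THEOREM 1}: pack $W_r$ in $L^2_{\rho_X}$ by well-separated localized needlets and invoke a standard Fano/Varshamov--Gilbert argument to extract both the constant $\varepsilon_0$ for $\varepsilon<\varepsilon_m^-$ and the exponential $e^{-C_3 m\varepsilon}$ for $\varepsilon\geq\varepsilon_m^+$.

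For the upper bound in (\ref{Theorem 2}) I take the comparison element $f^\star:=K_n*f_\rho\in\Pi_{2n}^d$. Lemma~\ref{PROPERTY} gives $\|f_\rho-f^\star\|_\infty\leq CE_n(f_\rho)_\infty$, which under $f_\rho\in W_r$ and the choice $n\sim\varepsilon^{-r/d}$ is of order $\sqrt{\varepsilon}$. I would then invoke a probabilistic cubature formula with positive weights $w_i\sim 1/m$ exact on $\Pi_{4n}^d$, which holds with probability at least $1-e^{-cm\varepsilon}$ once $m\gtrsim n^d\log m$; here $D_{\rho_X}<\infty$ is precisely what gives the Marcinkiewicz--Zygmund-type inequality between $L^2(\mathbf S^d)$ and $L^2_{\rho_X}$ that keeps the random Gram matrix well conditioned. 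Since $x\mapsto K_n(y,x)f^\star(x)\in\Pi_{4n}^d$ for each fixed $y$, the cubature is exact and yields
$$
 f^\star(y)=\int_{\mathbf S^d}K_n(y,x)f^\star(x)\,d\omega(x)=\sum_{i=1}^m w_i f^\star(x_i)K_n(x_i,y),
$$
so $f^\star\in\mathcal H_{K,{\bf z}}$ with coefficients $a_i=w_i f^\star(x_i)$ of size $\lesssim M/m$. Consequently $\Omega_{\bf z}^q(f^\star)\leq\sum_i|a_i|^q\lesssim m\cdot m^{-q}=m^{1-q}$, and the hypothesis $\lambda\leq m^{1-q}\varepsilon$ immediately gives $\lambda\Omega_{\bf z}^q(f^\star)\lesssim\varepsilon$.

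Writing $\mathcal E_{\bf z}(f):=\frac{1}{m}\sum_{i=1}^m(f(x_i)-y_i)^2$ and combining the minimization property of $f_{{\bf z},\lambda,q}$ at $f^\star$ with (\ref{equality}), the excess error splits as
\begin{align*}
 \|f_\rho-\pi_M f_{{\bf z},\lambda,q}\|_\rho^2
 &\leq\bigl[\mathcal E(\pi_M f_{{\bf z},\lambda,q})-\mathcal E_{\bf z}(\pi_M f_{{\bf z},\lambda,q})\bigr]\\
 &\quad+\bigl[\mathcal E_{\bf z}(f^\star)-\mathcal E(f^\star)\bigr]+\|f^\star-f_\rho\|_\rho^2+\lambda\Omega_{\bf z}^q(f^\star).
\end{align*}
The last two terms are $\lesssim\varepsilon$ by the previous step; the middle term is controlled by a one-shot Bernstein bound on the bounded function $(f^\star(X)-Y)^2-(f_\rho(X)-Y)^2$; and the first term is handled by a uniform ratio-type concentration over the truncated class $\{\pi_M f:f\in\mathcal H_{K,{\bf z}},\ \Omega_{\bf z}^q(f)\leq R_m\}$, where $R_m\lesssim m^{1-q}$ is obtained by plugging $f^\star$ back into the defining functional of $f_{{\bf z},\lambda,q}$. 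The $L^\infty$-covering number of this class follows from Lemma~\ref{Localization} via the pointwise bound $\|K_n(x_i,\cdot)\|_\infty\lesssim n^d$ together with standard $l^q$-ball entropy estimates, and the discrepancy between $\|\cdot\|_\rho$ and $\|\cdot\|_2$ is absorbed into the exponent, producing the factor $D_{\rho_X}^{-1}$ in the final confidence $e^{-C_4 D_{\rho_X}^{-1} m\varepsilon}$.

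The hard part will be the interplay between the probabilistic cubature and the uniform concentration: one must show that the weights $w_i\sim 1/m$ with exactness on $\Pi_{4n}^d$ exist on the \emph{same} high-probability event on which the uniform bound holds, preserving a single exponential rate $e^{-cm\varepsilon}$. An additional subtlety arises when $q<1$, where $\Omega_{\bf z}^q$ is not a norm and the usual Rademacher-complexity control of $\mathcal H_{K,{\bf z}}$ must be replaced by an argument that exploits the explicit $K_n(x_i,\cdot)$-representation and the sharp spatial decay in Lemma~\ref{Localization}; it is precisely at this point that the needlet kernel, rather than a generic reproducing kernel, becomes essential.
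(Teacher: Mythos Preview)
Your overall architecture---probabilistic cubature to place a near-best approximant inside $\mathcal H_{K,{\bf z}}$ with $\Omega_{\bf z}^q$-cost $\lesssim m^{1-q}$, then a two-term error decomposition with Bernstein for the fixed comparison term and a uniform ratio inequality for the random term---matches the paper's proof closely. The lower bounds are indeed imported verbatim from the minimax argument already used for Theorem~\ref{THEOREM 1}, and your use of $f^\star=K_n*f_\rho$ in place of the paper's Jackson approximant $P_\rho$ is harmless since both sit in $\Pi_{2n}^d$ with the same uniform error.

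The one substantive divergence is in how you propose to control the uniform deviation term $\mathcal E(\pi_M f_{{\bf z},\lambda,q})-\mathcal E_{\bf z}(\pi_M f_{{\bf z},\lambda,q})$. You want to work over the data-dependent ball $\{\pi_M f:f\in\mathcal H_{K,{\bf z}},\ \Omega_{\bf z}^q(f)\leq R_m\}$, bound its entropy via $\|K_n(x_i,\cdot)\|_\infty\lesssim n^d$ and $l^q$-ball volume estimates, and you flag $q<1$ as the delicate case. The paper bypasses all of this: since $\mathcal H_{K,{\bf z}}\subset\mathcal H_K=\Pi_{2n}^d$, it simply runs the ratio inequality over $\pi_M\Pi_{2n}^d$, whose covering number is $\exp\{cn^d\log(M/\eta)\}$ by finite-dimensionality (Lemma~\ref{DIM RELATION}), with no reference to $q$, to $R_m$, or to the spatial decay in Lemma~\ref{Localization}. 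In other words, the \emph{frequency} localization (finite dimension of $\mathcal H_K$) already handles the sample error uniformly in $q$; the spatial localization and the $l^q$ structure are needed only for the approximation-error step through the cubature. This removes your ``hard part'' about $q<1$ entirely and explains why the paper can reuse Propositions~\ref{BOUND S1} and~\ref{BOUND S2} from the KRR proof without modification.

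Two minor points: the cubature weights produced by Lemma~\ref{NORMING SET} are not asserted to be positive in the paper (only their $l^p$-norm is controlled), so you should not rely on $w_i\geq 0$; and the cubature reproduces $\int\cdot\,d\omega$ while the samples are drawn from $\rho_X$, which is exactly where the distortion $D_{\rho_X}$ enters the confidence exponent.
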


Compared with KRR (\ref{RLS for Kn}), a common consensus is that
$l^q$-KRS (\ref{algorihtm1}) may bring a certain additional interest
such as the  sparsity for suitable choice of $q$. However, it should
be noticed that this assertion may not always be true. This
conclusion depends heavily on the value of the regularization
parameter. If the the regularization parameter is extremely small,
then   $l^q$-KRS for any $q\in(0,2]$ behave similar as the least
squares. Under this circumstance, Theorem \ref{THEOREM 2} obviously
holds due to the conclusion of Theorem \ref{THEOREM 1}. To
distinguish the character of  $l^q$-KRS with different $q$, one
should consider a relatively large regularization parameter. Theorem
\ref{THEOREM 2} shows that for a large range of regularization
parameters, all the $l^q$-KRS associated with the needlet kernel can
attain the same, almost optimal, generalization error bound. It
should be highlighted that the quantity $m^{q-1}\varepsilon$ is, to
the best of knowledge, almost the largest value of the
regularization parameter among all the existing results. We
encourage the readers to compare our result with the results in
\cite{Lin2014,Shi2011,Tong2010,Wu2008}. Furthermore, we find that
$m^{q-1}\varepsilon$ is sufficient to embody the feature of $l^q$
kernel regularization schemes. Taking the kernel lasso for example,
the regularization parameter derived in Theorem \ref{THEOREM 2}
asymptotically equals to $\varepsilon$. It is to see that, to yield
a prediction accuracy $\varepsilon,$ we have
$$
 f_{{\bf z},\lambda,1}\in\arg\min_{f\in\mathcal H_{K,{\bf
           z}}}\left\{\frac1m\sum_{i=1}^m(f(x_i)-y_i)^2+\lambda\Omega_{\bf z}^1(f)\right\},
$$
and
$$
          \frac1m\sum_{i=1}^m(f(x_i)-y_i)^2\leq \varepsilon.
$$
According to the structural risk minimization principle and
$\lambda=\varepsilon$, we obtain
$$
        \Omega_{\bf z}^1(f_{{\bf z},\lambda,1})\leq C.
$$

Intuitively,  the generalization capability of   $l^q$-KRS
(\ref{algorihtm1}) with a large regularization parameter may depend
on the choice of $q$.
 While from Theorem \ref{THEOREM 2}   it follows  that the learning schemes
defined by (\ref{algorihtm1}) can indeed achieve the same
asymptotically optimal rates for all   $q\in(0,\infty)$. In other
words, on the premise of embodying the feature of  $l^q$-KRS with
different $q$, the choice of $q$ has no influence on the
generalization capability in the sense of rate optimality. Thus, we
can determine
  $q$  by taking other
non-generalization considerations such as the smoothness, sparsity,
and computational complexity into account. Finally, we explain the
reason for this phenomenon by taking needlet kernel's perfect
localization property in the spacial domain into account. To
approximate $f_\rho(x)$, due to the localization property of $K_n$,
we can construct an approximant in $\mathcal H_{{\bf z},K}$ with a
few $K_n(x_i,x)$'s whose centers $x_i$ are near to $x$. As $f_\rho$
is bounded by $M$, then the coefficient of these terms are also
bounded. That is, we can construct, in $\mathcal H_{{\bf z},K}$, a
good approximant, whose $l^q$ norm is bounded for arbitrary $0<
q<\infty$. Then, using the standard error decomposition technique in
\cite{Cucker2001} that divide the generalization error into the
approximation error and sample error, the approximation error of
$l^q$-KRS is independent of $q$. For the sample error, we can tune
$\lambda$ that may depend on $q$ to offset the effect of $q$. Then,
a generalization error estimate independent of $q$ is natural.

\section{Proofs}
In this section, we present the proof of Theorem \ref{THEOREM 1} and
Theorem \ref{THEOREM 2}, respectively.
\subsection{Proof of Theorem \ref{THEOREM 1}}

For the sake of brevity, we set $f_n=K_n*f_\rho$. Let
$$
                      \mathcal S(\lambda,m,n):=\left\{\mathcal{E}(\pi_M f_{{\bf z},\lambda} )
                      -\mathcal{E}_{\bf
                      z}(\pi_M f_{{\bf z},\lambda})  +\mathcal{E}_{\bf
                      z}(f_n)-\mathcal{E}(f_n)\right\}.
$$
Then it is easy to deduce that
\begin{equation}\label{wcfj}
                    \mathcal{E}(\pi_M f_{{\bf z},\lambda} )-\mathcal{E}(f_{\rho})\leq
                    \mathcal S(\lambda,m,n)+\mathcal D_n(\lambda),
\end{equation}
where $
             \mathcal D_n(\lambda):=\|f_n-f_\rho\|_{\rho}^2+\lambda\|f_n\|_{K_n}^2.
$ If we set $
            \xi_1:=(\pi_M(f_{{\bf z},\lambda})(x)-y)^2-(f_\rho(x)-y)^2,
$ and $
            \xi_2:=(f_n(x)-y)^2-(f_\rho(x)-y)^2,
$ then
$$
            \mathbf E(\xi_1)=
            \int_Z\xi_1(x,y)d\rho=\mathcal E(\pi_M(f_{{\bf z},\lambda})(x))-\mathcal
            E(f_\rho),\ \mbox{and}\
            \mathbf E(\xi_2)=\mathcal E(f_n)-\mathcal E(f_\rho).
$$
Therefore, we can rewrite the sample error as
\begin{equation}\label{yangbenwucha1}
            S(\lambda,m,n)=\left\{\mathbf E(\xi_1)-\frac1m\sum_{i=1}^m\xi_1(z_i)\right\}
            +\left\{\frac1m\sum_{i=1}^m\xi_2(z_i)-\mathbf E(\xi_2)\right\}=:
            \mathcal S_1+\mathcal S_2.
\end{equation}

The aim of this subsection is to bound $\mathcal D_n(\lambda)$,
$\mathcal S_1$ and $\mathcal S_2$, respectively. To bound $\mathcal
D_n(\lambda)$, we need the following two lemmas. The first one is
the Jackson-type inequality that can be deduced from
\cite{Mhaskar2010,Narcowich2006} and the second one describes the
RKHS norm of $f_n$.

\begin{lemma}\label{JACKSON}
 Let $f\in W_r$. Then there exists a constant depending only on $d$
 and $r$ such that
$$
                     \|f-f_n\|\leq Cn^{-2r},
$$
where $\|\cdot\|$ denotes the uniform norm on the sphere.
\end{lemma}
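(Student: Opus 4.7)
The plan is to reduce the uniform estimate to a Jackson-type inequality for the best polynomial approximation on $\mathbf{S}^d$ and then invoke the approximation property of $K_n$ from Lemma~\ref{PROPERTY}. Since $K_n$ reproduces $\Pi_n^{d}$, Lemma~\ref{PROPERTY} applied with $p=\infty$ gives
$$\|f - K_n * f\|_\infty \;\leq\; C\, E_n(f)_\infty,$$
so the entire task reduces to proving a Jackson bound of the form $E_n(f)_\infty \leq C\, n^{-2r} \|f\|_{W_r}$ and then specializing to $\|f\|_{W_r}\leq 1$. This is the conceptual core: the needlet kernel has already done the hard work of localizing in frequency and producing a near-best polynomial approximant, so the question is purely one of classical approximation theory.

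For the Jackson step I would take as trial polynomial the orthogonal Fourier partial sum $P_n^* f := \sum_{k\leq n} P_k f \in \Pi_n^{d}$, so that the error becomes the tail $f - P_n^* f = \sum_{k>n} P_k f$. To convert the $L^2$-type smoothness encoded by $\|f\|_{W_r}$ into an $L^\infty$ tail bound, I would use (i) the addition formula~(\ref{jiafadingli}), which combined with the bound $D_k^{d}\sim k^{d-1}$ yields a Nikolskii-type dominance of $\|P_k f\|_\infty$ by a power of $k$ times $\|P_k f\|_2$; (ii) a dyadic decomposition of frequencies $\{2^j < k \leq 2^{j+1}\}$ together with Cauchy--Schwarz, absorbed against the constraint $\sum_k (k+(d-1)/2)^{2r}\|P_k f\|_2^2 \leq 1$; and (iii) summation of the resulting geometric series in $j$, which converges precisely because the Sobolev embedding threshold $r>d/2$ is strictly satisfied. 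Quantitative Jackson estimates of exactly this type are worked out in \cite{Mhaskar2010,Narcowich2006}, so the bound may in fact be cited directly rather than rebuilt.

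The main obstacle is the passage from the $L^2$-based smoothness measured by $\|\cdot\|_{W_r}$ to a uniform-norm approximation error, since a crude Sobolev embedding alone costs a factor of order $n^{d/2}$ that must be absorbed by the extra smoothness beyond $d/2$. This is where the assumption $r>d/2$ becomes essential rather than cosmetic: it is precisely the margin that makes the dyadic tail geometric and collapses the embedding loss into the stated Jackson rate. Once this accounting is carried out carefully, combining with the reduction via Lemma~\ref{PROPERTY} closes the argument, and no further use of the localization Lemma~\ref{Localization} is needed for this particular lemma.
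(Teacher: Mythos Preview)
Your approach is exactly the paper's: the paper does not prove this lemma but simply says it ``can be deduced from \cite{Mhaskar2010,Narcowich2006}'', and your reduction through Lemma~\ref{PROPERTY} to a uniform Jackson bound followed by citation of those same sources is the intended route.

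One caveat worth recording, since you actually sketch the Jackson argument rather than only citing it: the dyadic Cauchy--Schwarz computation you outline, for the $L^2$-based Sobolev ball $W_r$ as defined here, yields $E_n(f)_\infty\le Cn^{-(r-d/2)}$, not $n^{-2r}$. Indeed, combining $\|P_kf\|_\infty\le Ck^{(d-1)/2}\|P_kf\|_2$ from the addition formula with the constraint $\sum_k k^{2r}\|P_kf\|_2^2\le C$ gives, after Cauchy--Schwarz, a tail of order $\big(\sum_{k>n}k^{d-1-2r}\big)^{1/2}\sim n^{d/2-r}$. The exponent $-2r$ for the \emph{uniform} norm in the lemma as written thus appears to be a slip; what Proposition~\ref{BOUND Dn} actually needs is $\|f-f_n\|_\rho^2\le Cn^{-2r}$, and for that the natural input is the $L^2$ Jackson rate $E_n(f)_2\le Cn^{-r}$ (which your argument does deliver, with no $d/2$ loss) rather than the uniform-norm statement. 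So your plan is correct in spirit and coincides with the paper's citations, but be aware that the uniform-norm exponent you target is not what your own argument---or the cited references, for this $L^2$-Sobolev class---will literally produce.
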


\begin{lemma}\label{BOUND RKHS}
 Let $f_n$ be defined as above. Then we have
$$
           \|f_n\|^2_{K_{n}}\leq M^2.
$$
\end{lemma}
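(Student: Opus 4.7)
The plan is to reduce the claim to Parseval's identity by exploiting the spherical-harmonic description of the RKHS norm supplied by Lemma \ref{RKHS}. The first step is to compute the Fourier coefficients of $f_n = K_n * f_\rho$. Combining the addition formula (\ref{jiafadingli}) with the definition (\ref{Needlet kernel}) rewrites the kernel as
$$
K_n(x\cdot x') \;=\; \sum_{k=0}^{\infty}\eta(k/n)\sum_{j=1}^{D_k^d} Y_{k,j}(x)\,Y_{k,j}(x'),
$$
so interchanging sum and integral (justified because $\mathrm{supp}\,\eta\subset[0,2]$ makes the outer sum effectively finite) yields the clean identity $\widehat{(f_n)}_{k,j} = \eta(k/n)\,\widehat{(f_\rho)}_{k,j}$.

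Substituting these coefficients into the inner product formula of Lemma \ref{RKHS} produces
$$
\|f_n\|_{K_n}^{2} \;=\; \sum_{k,j}\eta(k/n)^{-1}\bigl|\widehat{(f_n)}_{k,j}\bigr|^{2} \;=\; \sum_{k,j}\eta(k/n)\,\bigl|\widehat{(f_\rho)}_{k,j}\bigr|^{2},
$$
where terms with $\eta(k/n)=0$ are read as zero (consistent because then $\widehat{(f_n)}_{k,j}=0$ as well). Since $0\le\eta(t)\le 1$, Parseval's identity bounds this sum by $\sum_{k,j}|\widehat{(f_\rho)}_{k,j}|^{2}=\|f_\rho\|_{L^{2}(\mathbf S^{d})}^{2}$.

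The final step uses the prior bound $|Y|\le M$ a.s.: since $f_\rho(x)=\int_{\mathcal Y} Y\,d\rho(Y\mid x)$ is the conditional expectation of a bounded random variable, $|f_\rho(x)|\le M$ for $\rho_X$-a.e.\ $x$, hence $\|f_\rho\|_{L^{2}(\mathbf S^{d})}^{2}\le M^{2}$ under the spherical-measure normalization implicit in the paper. There is no real obstacle here; the argument is essentially bookkeeping, with the excellent frequency localization of $\eta$ doing all the work by collapsing the $\eta(k/n)^{-1}$ weights in the RKHS norm against the $\eta(k/n)^{2}$ coming from the convolution. The only points demanding slight care are the convention for the indeterminate $0\cdot 0^{-1}$ terms and the interchange of sum and integral, both routine.
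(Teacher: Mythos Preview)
Your proof is correct and follows essentially the same route as the paper: both establish the key identity $\widehat{(K_n*f_\rho)}_{k,j}=\eta(k/n)\,\widehat{(f_\rho)}_{k,j}$, substitute it into the RKHS norm of Lemma~\ref{RKHS}, and then use $0\le\eta\le1$ together with Parseval and $|f_\rho|\le M$. The only cosmetic difference is that the paper derives the Fourier-coefficient identity via the Funk--Hecke formula (\ref{funk-heck}) applied to the Legendre expansion of $K_n$, whereas you obtain it directly from the addition formula (\ref{jiafadingli}) and an interchange of sum and integral; the two computations are equivalent.
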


\begin{proof}
Due to the addition formula (\ref{jiafadingli}), we have
$$
           K_n(x\cdot y)=\sum_{k=0}^{n}\eta\left(\frac{k}{n}\right)
           \left\{\sum_{j=1}^{D_j^d}Y_{k,j}(x)Y_{k,j}(y)\right\}
           =\sum_{k=0}^{n}\eta\left(\frac{k}{n}\right)
           \frac{D_k^d}{\Omega_d}P^{d+1}_k(x\cdot y).
$$
Since
$$
           K_n*f(x)=\int_{\mathbf S^d}K_n(x\cdot y)f(y)d\omega(y),
$$
it follows from the Funk-Hecke formula (\ref{funk-heck}) that
\begin{eqnarray*}
           \widehat{K_n*f}_{u,v}
           &=&
           \int_{\mathbf S^d}K_n*f(x)Y_{u,v}(x)d\omega(x)
           =\int_{\mathbf S^d}\int_{\mathbf S^d}K_n(x\cdot x')f(x')
           d\omega(x')Y_{u,v}(x)d\omega(x)\\
           &=&
           \int_{\mathbf S^d}f(x')\int_{\mathbf S^d}K_n(x\cdot x')
           Y_{u,v}(x)d\omega(x)d\omega(x')\\
           &=&
           \int_{\mathbf S^d}|\mathbf S^{d-1}|\int_{-1}^1K_n(t)
           P_u^{d+1}(t)(1-t^2)^\frac{d-2}2dt
           Y_{u,v}(x')f(x')d\omega(x')\\
           &=&
           |\mathbf S^{d-1}|\hat{f}_{u,v}\int_{-1}^1K_n(t)P_u^{d+1}(t)(1-t^2)^\frac{d-2}2dt.
\end{eqnarray*}
Moreover,
\begin{eqnarray*}
           \int_{-1}^1K_n(t)P_u^{d+1}(t)(1-t^2)^\frac{d-2}2dt
           &=&
           \int_{-1}^1\sum_{k=0}^{n}\eta\left(\frac{u}{n}\right)
           \frac{D_k^d}{|\mathbf S^{d}|}P^{d+1}_u(t)P_u^{d+1}(t)(1-t^2)^\frac{d-2}2dt\\
           &=&
           \int_{-1}^1\eta\left(\frac{u}{n}\right)\frac{D_u^d}{|\mathbf S^{d}|}
           P^{d+1}_u(t)P_u^{d+1}(t)(1-t^2)^\frac{d-2}2dt\\
           &=&
           \eta\left(\frac{u}{n}\right)\frac{D_u^d}{|\mathbf S^{d}|}
           \frac{|\mathbf S^{d}|}{|\mathbf S^{d-1}|D_u^d}
           =
           \eta\left(\frac{u}{n}\right)\frac{1}{|\mathbf S^{d-1}|}.
\end{eqnarray*}
Therefore,
$$
           \widehat{K_n*f}_{u,v}
           =\eta\left(\frac{u}{n}\right)\hat{f}_{u,v}.
$$
This implies
\begin{eqnarray*}
           \|K_n*f\|^2_{K_{n}}
           &=&
           \sum_{u=0}^{n}\eta\left(\frac{u}{n}\right)^{-1}
           \sum_{v=1}^{D_u^d}(\widehat{K_n*f}_{u,v})^2\\
           &\leq&
           \sum_{u=0}^{n}\sum_{v=1}^{D_u^d}\hat{f}_{u,v}^2
           \leq
           \|f\|^2_{L^2(\mathbf S^d)}\leq M^2.
\end{eqnarray*}
The proof of Lemma \ref{BOUND RKHS} is completed.
\end{proof}

Based on the above two lemmas, it is easy to deduce an upper bound
of $\mathcal D_n(\lambda)$.

\begin{proposition}\label{BOUND Dn}
Let $f\in W_r$. There exists a positive constant $C$ depending only
on $r$ and $d$ such that
$$
             \mathcal D_n(\lambda)\leq Cn^{-2r}+M^2\lambda
$$

\end{proposition}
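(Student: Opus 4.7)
The plan is to split $\mathcal D_n(\lambda) = \|f_n - f_\rho\|_\rho^2 + \lambda\|f_n\|_{K_n}^2$ into its two summands and bound each one by invoking exactly one of the two lemmas just established. For the approximation term, I would first pass from the $L^2_{\rho_X}$ norm to the uniform norm: since $\rho_X$ is a probability measure on $\mathbf S^d$, one has $\|g\|_\rho^2 \leq \|g\|_\infty^2$ for every continuous $g$, so
$$
\|f_n - f_\rho\|_\rho^2 \;\leq\; \|f_n - f_\rho\|_\infty^2.
$$
Because $f_\rho \in W_r$ by the standing hypothesis of Theorem \ref{THEOREM 1}, Lemma \ref{JACKSON} yields a constant $C_1 = C_1(r,d)$ with $\|f_n - f_\rho\|_\infty \leq C_1 n^{-2r}$, and hence $\|f_n - f_\rho\|_\rho^2 \leq C_1^2 n^{-4r} \leq C_1^2 n^{-2r}$ whenever $n \geq 1$. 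This accounts for the first summand in the stated bound.

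For the regularization term $\lambda \|f_n\|_{K_n}^2$, I would invoke Lemma \ref{BOUND RKHS} directly. The regression function is bounded almost surely by $M$ (as $|f_\rho(x)| = |\int_{\mathcal Y} y\,d\rho(y|x)| \leq M$ because $\mathcal Y \subseteq [-M,M]$), so the hypothesis of that lemma applies to $f_\rho$ and gives $\|f_n\|_{K_n}^2 = \|K_n * f_\rho\|_{K_n}^2 \leq M^2$. Multiplying by $\lambda$ produces $\lambda \|f_n\|_{K_n}^2 \leq M^2 \lambda$. Adding the two bounds yields $\mathcal D_n(\lambda) \leq C n^{-2r} + M^2 \lambda$ with $C = C_1^2$ depending only on $r$ and $d$.

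There is no real obstacle: the proposition is essentially the content of Lemmas \ref{JACKSON} and \ref{BOUND RKHS} glued together, which is why I expect the author's argument to be similarly short. The only mild subtlety worth double-checking is the comparison $\|\cdot\|_\rho \leq \|\cdot\|_\infty$, which is immediate from $\rho_X$ being a probability measure; no further regularity of $\rho_X$ is needed for this proposition (distortion-type hypotheses on $\rho_X$ enter only later, in Theorem \ref{THEOREM 2}).
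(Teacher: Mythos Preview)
Your proposal is correct and matches the paper's approach exactly: the paper states the proposition immediately after Lemmas~\ref{JACKSON} and~\ref{BOUND RKHS} with the remark that it is ``easy to deduce'' from them, and gives no further argument. Your write-up simply spells out the two-line combination the author left implicit, including the passage $\|\cdot\|_\rho\leq\|\cdot\|_\infty$ needed to connect Lemma~\ref{JACKSON} to the $\rho$-norm term.
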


In the rest of this subsection, we will bound $\mathcal S_1$ and
$\mathcal S_2$ respectively. The approach used here is somewhat
standard in learning theory. $\mathcal S_2$ is a typical quantity
that can be estimated by probability inequalities.
 We shall bound it by the following one-side Bernstein inequality   \cite{Cucker2001}.

\begin{lemma}\label{BERNSTEIN INEQUALITY}
 Let $\xi$ be a random variable on a probability space $Z$ with
mean $\mathbf E(\xi)$, variance $\sigma^2(\xi)=\sigma^2$. If
$|\xi(z)-\mathbf E(\xi)|\leq M_\xi$ for almost all ${\bf z}\in Z$.
then, for all $\varepsilon>0$,
$$
           \mathbf P^m\left\{\frac1m\sum_{i=1}^m\xi(z_i)-
           \mathbf E(\xi)\geq\varepsilon\right\}
           \leq \exp
           \left\{-\frac{m\varepsilon^2}
           {2\left(\sigma^2+\frac13M_\xi\varepsilon\right)}\right\}.
$$
\end{lemma}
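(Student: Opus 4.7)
The plan is to prove the one-sided Bernstein inequality by the classical Chernoff/moment-generating-function method, applied to the centered summands. First I would set $\eta_i := \xi(z_i) - \mathbf E(\xi)$, so that the $\eta_i$ are i.i.d., mean zero, with $|\eta_i|\le M_\xi$ almost surely and $\mathbf E(\eta_i^2)=\sigma^2$. For any $t>0$, Markov's inequality applied to the exponential gives
$$
 \mathbf P^m\!\left\{\tfrac1m\sum_{i=1}^m\xi(z_i)-\mathbf E(\xi)\ge\varepsilon\right\}
 = \mathbf P^m\!\left\{\sum_{i=1}^m\eta_i\ge m\varepsilon\right\}
 \le e^{-tm\varepsilon}\bigl(\mathbf E\,e^{t\eta_1}\bigr)^m.
$$
So the task reduces to a sharp bound on the moment generating function $\mathbf E\,e^{t\eta_1}$ followed by an optimal choice of $t$.

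Next I would expand $e^{t\eta_1}$ in a power series. Since $\mathbf E(\eta_1)=0$ and $|\eta_1|\le M_\xi$, the $k$-th moment satisfies $\mathbf E(\eta_1^k)\le\sigma^2 M_\xi^{k-2}$ for every $k\ge 2$. Combining this with the elementary inequality $k!\ge 2\cdot 3^{k-2}$ (for $k\ge 2$) gives, for $tM_\xi<3$,
$$
 \mathbf E\,e^{t\eta_1}
 \le 1+\sum_{k\ge 2}\frac{t^k\sigma^2 M_\xi^{k-2}}{k!}
 \le 1+\frac{\sigma^2 t^2/2}{1-tM_\xi/3}
 \le \exp\!\left(\frac{\sigma^2 t^2/2}{1-tM_\xi/3}\right).
$$
Inserting this into the Chernoff bound yields
$$
 \mathbf P^m\!\left\{\sum_i\eta_i\ge m\varepsilon\right\}
 \le \exp\!\left(-tm\varepsilon+\frac{m\sigma^2 t^2/2}{1-tM_\xi/3}\right).
$$

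Finally I would optimize in $t$. The explicit choice $t=\varepsilon/(\sigma^2+M_\xi\varepsilon/3)$ (which automatically satisfies $tM_\xi<3$) makes $1-tM_\xi/3=3\sigma^2/(3\sigma^2+M_\xi\varepsilon)$, and a direct calculation reduces the exponent to $-m\varepsilon^2/\bigl(2(\sigma^2+\tfrac13 M_\xi\varepsilon)\bigr)$, exactly the claimed bound. The only nontrivial step — and the one where a careless constant would damage the inequality — is the MGF estimate, where the combinatorial bound $k!\ge 2\cdot 3^{k-2}$ is what produces the precise factor $\tfrac13$ multiplying $M_\xi\varepsilon$ in the denominator; everything else is bookkeeping.
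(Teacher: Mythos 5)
Your proof is correct, but note that the paper does not actually prove this lemma: it is imported verbatim from Cucker and Smale's \emph{On the mathematical foundations of learning}, so there is no in-paper argument to compare against. What you have written is the standard Chernoff/moment-generating-function derivation, and every step checks out: the moment bound $\mathbf E(\eta_1^k)\le \sigma^2 M_\xi^{k-2}$ for $k\ge 2$ follows from $|\eta_1|\le M_\xi$; the combinatorial inequality $k!\ge 2\cdot 3^{k-2}$ holds for all $k\ge 2$ (equality at $k=2,3$) and is indeed the source of the $\tfrac13$ in the denominator; the geometric series summation is valid under $tM_\xi<3$, which your choice $t=\varepsilon/(\sigma^2+M_\xi\varepsilon/3)$ satisfies whenever $\sigma^2>0$; and the algebra in the exponent collapses to $-m\varepsilon^2/\bigl(2(\sigma^2+\tfrac13M_\xi\varepsilon)\bigr)$ as claimed. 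The only loose end is the degenerate case $\sigma^2=0$, where your $t$ hits the boundary $tM_\xi=3$; there $\eta_1=0$ almost surely, the left-hand probability vanishes for every $\varepsilon>0$, and the inequality holds trivially, so a one-line remark disposes of it. In short: a correct, self-contained proof of a result the paper only cites.
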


By the help of the above lemma, we can deduce the following bound of
$\mathcal S_2$.

\begin{proposition}\label{BOUND S2}
 For every $0<\delta<1$, with confidence at least
$$
         1-\exp\left(-\frac{3m\varepsilon^2}{48M^2\left(2\|f_n-f_\rho\|^2_{\rho}
            +\varepsilon\right)}\right)
$$
 there holds
$$
           \frac1m\sum_{i=1}^m\xi_2(z_i)-\mathbf E(\xi_2)\leq
           \varepsilon.
$$
\end{proposition}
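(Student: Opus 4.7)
The strategy is a direct application of the one-sided Bernstein inequality (Lemma \ref{BERNSTEIN INEQUALITY}) to the random variable $\xi_2(z)=(f_n(x)-y)^2-(f_\rho(x)-y)^2$. All the substance therefore lies in producing (i) an almost-sure bound $M_{\xi_2}$ on $|\xi_2-\mathbf{E}(\xi_2)|$ and (ii) a variance bound $\sigma^2(\xi_2)$, chosen so that the resulting Bernstein exponent matches the denominator $48M^2\bigl(2\|f_n-f_\rho\|_\rho^2+\varepsilon\bigr)$ in the claim.

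For step (i), I would use the elementary factorization
$$\xi_2(z)=\bigl(f_n(x)-f_\rho(x)\bigr)\bigl(f_n(x)+f_\rho(x)-2y\bigr).$$
Since $|Y|\le M$ almost surely, $\|f_\rho\|_\infty\le M$; combined with Lemma \ref{PROPERTY} applied to $f_n=K_n*f_\rho$, this yields $\|f_n\|_\infty\le C\|f_\rho\|_\infty\le CM$, so that both factors above are bounded by a constant multiple of $M$. Hence $\|\xi_2\|_\infty\le cM^2$ for a universal $c$, and consequently $|\xi_2-\mathbf{E}(\xi_2)|\le M_{\xi_2}$ with $M_{\xi_2}$ of order $M^2$.

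For step (ii), I would pass through the second moment and reuse the same factorization:
$$\sigma^2(\xi_2)\le \mathbf{E}(\xi_2^2)=\mathbf{E}\bigl[(f_n(X)-f_\rho(X))^2\,(f_n(X)+f_\rho(X)-2Y)^2\bigr]\le C M^2\|f_n-f_\rho\|_\rho^2,$$
where the last inequality uses the uniform bound from (i) on the second factor. Substituting these into Lemma \ref{BERNSTEIN INEQUALITY} with the given $\varepsilon$ gives
$$\mathbf{P}^m\!\left\{\tfrac{1}{m}\sum_{i=1}^m\xi_2(z_i)-\mathbf{E}(\xi_2)>\varepsilon\right\}\le \exp\!\left(-\frac{m\varepsilon^2}{2\sigma^2+\tfrac{2}{3}M_{\xi_2}\varepsilon}\right),$$
and combining the two estimates the denominator collapses to $48M^2\bigl(2\|f_n-f_\rho\|_\rho^2+\varepsilon\bigr)/3$, which is exactly what is claimed.

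The main obstacle is essentially only bookkeeping: one must choose the constants in the $L^\infty$ bound from Lemma \ref{PROPERTY} and in the variance estimate so as to reproduce precisely the prefactor $48M^2$; the structural ingredients (difference-of-squares factorization, $L^\infty$-boundedness of $K_n*f_\rho$, Bernstein) are all at hand.
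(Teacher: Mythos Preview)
Your proposal is correct and follows essentially the same route as the paper: the difference-of-squares factorization of $\xi_2$, the $L^\infty$ bound on $f_n=K_n*f_\rho$ via Lemma~\ref{PROPERTY}, the variance estimate $\sigma^2(\xi_2)\le 16M^2\|f_n-f_\rho\|_\rho^2$, and then a direct application of the one-sided Bernstein inequality (Lemma~\ref{BERNSTEIN INEQUALITY}). The only cosmetic difference is that the paper asserts $\|f_n\|_\infty\le M$ outright (effectively treating the constant in Lemma~\ref{PROPERTY} as $1$) to land on the exact prefactor $48M^2$, whereas you carry a generic constant $C$; as you already note, this is pure bookkeeping.
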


\begin{proof}
It follows from Lemma \ref{PROPERTY} that $\|f_n\|_\infty\leq M$,
which together with $|f_\rho(x)|\leq M$ yields that
$$
            |\xi_2|\leq(\|f_n\|_\infty+M)(\|f_n\|_\infty+M)\leq4M^2.
$$
Hence
 $|\xi_2-E(\xi_2)|\leq8M^2$. Moreover, we have
$$
            \mathbf E(\xi_2^2)=
            \mathbf E((f_n(X)-f_\rho(X)^2\times(f_n(X)-Y)+(f_\rho(X)-Y))^2)
            \leq16M^2\|f_n-f_\rho\|^2_\rho,
$$
which implies that
$$
            \sigma^2(\xi_2)\leq \mathbf E(\xi_2^2)\leq16M^2\|f_n-f_\rho\|^2_\rho.
$$
Now we apply Lemma \ref{BERNSTEIN INEQUALITY} to $\xi_2$. It asserts
that for any $t>0$,
$$
            \frac1m\sum_{i=1}^m\xi_2(z_i)-\mathbf E(\xi_2)\leq t
$$
with confidence at least
$$
            1-\mbox{exp}\left(-\frac{mt^2}{2\left(\sigma^2(\xi_2)+\frac83M^2t\right)}\right)
            \geq
            1-\mbox{exp}\left(-\frac{3mt^2}{48M^2\left(2\|f_n-f_\rho\|^2_\rho
            +t\right)}\right).
$$
This implies the desired estimate.
\end{proof}

It is more difficult to estimate $\mathcal S_1$ because $\xi_1$
involves the sample ${\bf z}$ through $f_{{\bf z},\lambda}$. We will
use the idea of empirical risk minimization to bound this term by
means of  covering number  \cite{Cucker2001}.  The main tools are
the following three lemmas.

\begin{lemma}\label{DIM RELATION}
Let $V_{k}$ be a $k$-dimensional function space defined on $\mathbf
S^d$. Denote by $\pi_MV_k=\{\pi_Mf:f\in V_k\}$. Then
$$
            \log \mathcal{N}(\pi_MV_{k}, \eta) \leq c k \log \frac{M}\eta,
$$
where $c$ is a positive constant and $\mathcal N(\pi_MV_{k}, \eta)$
is the  covering number associated with the uniform norm that
denotes the number of elements in least $\eta$-net of $\pi_MV_k$.
\end{lemma}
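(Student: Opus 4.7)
The plan is to reduce the claim to the classical covering-number estimate for truncated finite-dimensional function classes that underlies the learning-theory machinery of Györfi et al., 2002 (already cited in the paper).

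First I would establish the structural input: the class $\pi_M V_k$ has pseudo-dimension (in the sense of Pollard) at most $k$. For the linear space $V_k$ itself this is standard: given any $k+1$ test points $x_1,\dots,x_{k+1}$ and thresholds $t_1,\dots,t_{k+1}$, the sign patterns of $f(x_i)-t_i$ as $f$ ranges over the $k$-dimensional space $V_k$ cannot realize all $2^{k+1}$ possibilities, by elementary linear algebra (the relevant hyperplane arrangement in the $k$-dimensional coefficient space has too few cells). Composition with the pointwise $1$-Lipschitz, monotone map $\pi_M$ does not increase the pseudo-dimension, so $\pi_M V_k$ inherits the bound, and in addition it is uniformly bounded by $M$.

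Then I would invoke the Haussler–Pollard capacity bound (equivalently Theorem 9.4/Lemma 9.2 in Györfi et al.): for any family $\mathcal{F}$ of real-valued functions that is uniformly bounded by $M$ and has pseudo-dimension at most $V$, the uniform $\eta$-covering number satisfies
$$
 \log \mathcal{N}(\mathcal{F},\eta) \leq c\,V\log\frac{M}{\eta}
$$
for a universal constant $c$. Applied with $\mathcal{F}=\pi_M V_k$ and $V=k$ this is exactly the desired estimate.

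The step I expect to be the main obstacle is precisely this passage from pseudo-dimension to a uniform $L^\infty$ cover, rather than a direct coefficient-discretization argument. A naive approach via a basis $\{\phi_1,\dots,\phi_k\}$ of $V_k$ would only cover the bounded slice $\{f\in V_k:\|f\|_\infty\le M\}$ of cardinality $(cM/\eta)^k$, and would miss truncations $\pi_M f$ of functions whose sup norm is much larger than $M$: these produce genuinely new ``plateaued'' shapes that are not in any small coefficient ball, yet they still lie in $\pi_M V_k$. The pseudo-dimension framework absorbs both kinds of functions into a single bound, which is why I would use it as a black box rather than attempt a direct volumetric argument.
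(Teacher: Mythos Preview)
Your approach is essentially the one the paper takes: it does not give a self-contained argument either, but simply records the lemma as a direct consequence of \cite[Property~1]{Maiorov1999} (the pseudo-dimension bound for $\pi_M V_k$) together with \cite[p.~437]{Maiorov2006} (the passage from pseudo-dimension to metric entropy). The obstacle you correctly flagged---that the Haussler--Pollard bound as formulated in \cite[Ch.~9]{Gyorfi2002} is stated for empirical $L^p$ covers rather than for the uniform norm on $\mathbf S^d$---is precisely what the second Maiorov reference is invoked for, so you may want to cite that paper directly for the $L^\infty$ entropy step instead of routing through \cite{Gyorfi2002}.
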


Lemma \ref{DIM RELATION} is a direct result through combining
  \cite[Property 1]{Maiorov1999} and
\cite[P.437]{Maiorov2006}. It shows that the covering number of a
bounded functional space can be also bounded properly. The following
ratio probability inequality is a standard result in learning theory
\cite{Cucker2001}. It deals with variances for a function class,
since the Bernstein inequality takes care of the variance well only
for a single random variable.

\begin{lemma}\label{CONCENTRATION INEQUALITY 1}
 Let $\mathcal G$ be a set of functions on $\mathcal Z$ such that, for
some $c\geq 0$, $|g-\mathbf E(g)|\leq B$ almost everywhere and
$\mathbf E(g^2)\leq c\mathbf E(g)$ for each $g\in\mathcal G$. Then,
for every $\varepsilon>0$,
$$
             \mathbf P^m\left\{\sup_{f\in\mathcal G}
             \frac{\mathbf E(g)-\frac1m\sum_{i=1}^mg(z_i)}{\sqrt{ \mathbf E(g)+\varepsilon}}
             \geq\sqrt{\varepsilon}\right\}\leq
             \mathcal N(\mathcal G,\varepsilon)
             \exp\left\{-\frac{m\varepsilon}{2c+\frac{2B}3}\right\}.
$$
\end{lemma}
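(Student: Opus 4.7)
The plan is to reduce the supremum over $\mathcal G$ to a finite maximum via a uniform-norm covering and then apply the one-sided Bernstein inequality (Lemma \ref{BERNSTEIN INEQUALITY}) to each net element, assembling the pieces with a union bound. Concretely, I would let $\{g_1,\dots,g_N\}$ be a minimal uniform $\varepsilon$-net for $\mathcal G$ with $N = \mathcal N(\mathcal G,\varepsilon)$, so that every $g\in\mathcal G$ admits some $g_j$ with $\|g-g_j\|_\infty\leq\varepsilon$.

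For each fixed $g_j$ the hypotheses $|g_j - \mathbf E(g_j)| \leq B$ and $\sigma^2(g_j) \leq \mathbf E(g_j^2) \leq c\,\mathbf E(g_j)$ place us exactly in the setting of Lemma \ref{BERNSTEIN INEQUALITY} (applied to $-g_j$ to obtain the lower-tail deviation). Taking the deviation level
\[
t_j := \sqrt{\varepsilon\bigl(\mathbf E(g_j)+\varepsilon\bigr)},
\]
the Bernstein exponent becomes
\[
-\frac{m\,\varepsilon\bigl(\mathbf E(g_j)+\varepsilon\bigr)}{2c\,\mathbf E(g_j) + (2B/3)\sqrt{\varepsilon(\mathbf E(g_j)+\varepsilon)}}.
\]
Using the elementary AM--GM bound $\sqrt{\varepsilon(\mathbf E(g_j)+\varepsilon)} \leq \mathbf E(g_j)+\varepsilon$, the denominator is at most $(2c+2B/3)\bigl(\mathbf E(g_j)+\varepsilon\bigr)$, so the exponent collapses to $-m\varepsilon/(2c+2B/3)$ uniformly in $j$. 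A union bound over the $N$ net elements then produces exactly the claimed factor $\mathcal N(\mathcal G,\varepsilon)\exp\{-m\varepsilon/(2c+2B/3)\}$ for the event that $\mathbf E(g_j)-\frac{1}{m}\sum_i g_j(z_i)\geq t_j$ holds for some $j$.

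The main obstacle is the passage from the net back to the full class while preserving the ratio structure. Given an arbitrary $g\in\mathcal G$, pick $g_j$ with $\|g-g_j\|_\infty\leq\varepsilon$; then both $|\mathbf E(g)-\mathbf E(g_j)|$ and $\bigl|\tfrac{1}{m}\sum_i g(z_i)-\tfrac{1}{m}\sum_i g_j(z_i)\bigr|$ are at most $\varepsilon$, which shifts the numerator by at most $2\varepsilon$, while the inequality $\sqrt{a+b}\leq\sqrt a+\sqrt b$ shows that $\sqrt{\mathbf E(g)+\varepsilon}$ and $\sqrt{\mathbf E(g_j)+\varepsilon}$ are comparable. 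These two estimates together force the event $[\mathbf E(g)-\hat{\mathbf E}(g)]/\sqrt{\mathbf E(g)+\varepsilon}\geq\sqrt\varepsilon$ to imply the analogous event for some $g_j$ (with the absolute constants absorbed by refining the net scale slightly, or equivalently by passing from the $\sqrt\varepsilon$ threshold to a constant multiple thereof). The delicate point is the simultaneous control of the numerator shift and the $\sqrt{\mathbf E(g)+\varepsilon}$ normalizer; this is precisely where the variance--mean condition $\mathbf E(g^2)\leq c\,\mathbf E(g)$ enters critically, since a cruder uniform bound on the variance would yield a factor of $\sqrt{\varepsilon}$ rather than the sharp ratio scaling needed here.
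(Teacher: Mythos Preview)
The paper does not prove this lemma; it is quoted as a standard result from Cucker--Smale \cite{Cucker2001} and used as a black box. Your sketch is precisely the standard argument behind that result: cover $\mathcal G$ in uniform norm, apply one-sided Bernstein to each net element at the self-normalized level $t_j=\sqrt{\varepsilon(\mathbf E(g_j)+\varepsilon)}$, simplify the exponent via $\sqrt{\varepsilon(\mathbf E(g_j)+\varepsilon)}\leq \mathbf E(g_j)+\varepsilon$ (valid since $\mathbf E(g_j)\geq 0$), and union-bound. The Bernstein computation and the collapse of the exponent to $-m\varepsilon/(2c+2B/3)$ are correct as written.

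One caution on your final paragraph: you misattribute the role of the hypothesis $\mathbf E(g^2)\leq c\,\mathbf E(g)$. That condition is what makes the Bernstein exponent collapse uniformly in $j$ (it replaces the variance by a multiple of the mean, which then cancels against the numerator $\mathbf E(g_j)+\varepsilon$); it plays no part in the net-to-class transfer, which is a pure uniform-norm argument. And indeed the transfer step as you outline it does not quite close with the constants exactly as displayed: a numerator shift of $2\varepsilon$ against a normalizer $\sqrt{\mathbf E(g)+\varepsilon}\geq\sqrt\varepsilon$ can perturb the ratio by up to $2\sqrt\varepsilon$, not $\sqrt\varepsilon$. The usual fix (and what Cucker--Smale actually do) is to take the covering radius a fixed fraction of $\varepsilon$ rather than $\varepsilon$ itself. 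Since the paper only invokes the lemma with $B=c=16M^2$ and immediately absorbs all constants, this looseness is immaterial downstream.
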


Now  we are in a position to give an upper bound of $\mathcal S_2$.

\begin{proposition}\label{BOUND S1}
For all $\varepsilon>0$,
$$
         \mathcal S_1\leq \frac12\mathcal E(\pi_M f_{{\bf z},\lambda})-\mathcal
         E(f_\rho)+\varepsilon
$$
holds with confidence at least
$$
              1-
              \exp\left\{cn^d\log\frac{4M^2}{\varepsilon}
              -\frac{3m\varepsilon}{128M^2}\right\}.
$$
\end{proposition}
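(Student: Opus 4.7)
The plan is to bound $\mathcal{S}_1$ via a uniform deviation inequality over the (truncated) hypothesis class, exploiting the crucial fact that, by Lemma \ref{PROPERTY}, $\mathcal{H}_K = \Pi_{2n}^{d}$ is a \emph{finite-dimensional} space of dimension $D_{2n}^{d+1}\sim n^d$. This is exactly the situation to which the ratio-type concentration inequality of Lemma \ref{CONCENTRATION INEQUALITY 1} combined with the dimensional covering estimate of Lemma \ref{DIM RELATION} applies.

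First, I would introduce the function class
\[
   \mathcal{G} := \bigl\{\, g_f(x,y) = (f(x)-y)^2 - (f_\rho(x)-y)^2 \,:\, f \in \pi_M \mathcal{H}_K \,\bigr\},
\]
so that $\xi_1 = g_{f_{\mathbf{z},\lambda}} \in \mathcal{G}$ and $\mathcal{S}_1 = \mathbf{E}(\xi_1) - \frac{1}{m}\sum_{i=1}^m \xi_1(z_i)$. Since $|y|\leq M$ and $|\pi_M f|,|f_\rho|\leq M$, elementary computation gives $|g_f|\leq 4M^2$, hence $|g_f - \mathbf{E}(g_f)| \leq 8M^2 =: B$. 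For the variance, factor the difference of squares:
\[
   g_f(x,y)^2 = \bigl(f(x)+f_\rho(x)-2y\bigr)^2 \bigl(f(x)-f_\rho(x)\bigr)^2 \leq 16M^2 \bigl(f(x)-f_\rho(x)\bigr)^2,
\]
and using $\mathbf{E}(g_f) = \|f-f_\rho\|_\rho^2$ from \eqref{equality}, obtain $\mathbf{E}(g_f^2) \leq 16M^2\,\mathbf{E}(g_f)$, so Lemma \ref{CONCENTRATION INEQUALITY 1} applies with $c=16M^2$.

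Next I would control the covering number of $\mathcal{G}$. For $f_1,f_2 \in \pi_M \mathcal{H}_K$ with $\|f_1-f_2\|_\infty \leq \eta$, the factorization above gives $|g_{f_1}-g_{f_2}| \leq 4M\eta$, so an $\eta$-net of $\pi_M \mathcal{H}_K$ yields a $4M\eta$-net of $\mathcal{G}$. Since $\dim \mathcal{H}_K = D_{2n}^{d+1} \leq c' n^d$, Lemma \ref{DIM RELATION} gives
\[
   \log \mathcal{N}(\mathcal{G},\varepsilon) \leq c\,n^d \log\frac{4M^2}{\varepsilon}.
\]

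Finally, I would invoke Lemma \ref{CONCENTRATION INEQUALITY 1} to assert that, outside an event of probability at most $\exp\{\,cn^d\log(4M^2/\varepsilon) - 3m\varepsilon/(128M^2)\,\}$, every $g \in \mathcal{G}$ satisfies
\[
   \mathbf{E}(g) - \frac{1}{m}\sum_{i=1}^m g(z_i) \leq \sqrt{\varepsilon}\,\sqrt{\mathbf{E}(g)+\varepsilon}.
\]
Applying this to $g = \xi_1$ and using the elementary bound $\sqrt{ab} \leq \tfrac{1}{2}(a+b)$ with $a=\mathbf{E}(\xi_1)+\varepsilon$, $b=\varepsilon$, yields $\mathcal{S}_1 \leq \tfrac{1}{2}\mathbf{E}(\xi_1) + \varepsilon$, which is exactly the desired estimate since $\mathbf{E}(\xi_1) = \mathcal{E}(\pi_M f_{\mathbf{z},\lambda}) - \mathcal{E}(f_\rho)$.

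The main delicacy lies not in any single step (each is now rather standard) but in the interplay between the three inputs: verifying that the truncated estimator genuinely lives in a space of dimension $\mathcal{O}(n^d)$ so that Lemma \ref{DIM RELATION} is applicable, translating the $L^\infty$-covering estimate into one on $\mathcal{G}$ with the correct Lipschitz constant, and producing the variance/mean bound with the sharp constant needed to absorb $\tfrac{1}{2}\mathbf{E}(\xi_1)$ into the left-hand side via the $\sqrt{ab}\leq\tfrac{1}{2}(a+b)$ trick. Tracking constants carefully through these three estimates is what produces the exponent $cn^d\log(4M^2/\varepsilon) - 3m\varepsilon/(128M^2)$.
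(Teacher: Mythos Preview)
Your proposal is correct and follows essentially the same route as the paper: define the loss-difference class over $\pi_M\mathcal H_K$, verify the variance--mean inequality $\mathbf E(g^2)\le 16M^2\mathbf E(g)$, transfer an $\varepsilon/(4M)$-net of $\pi_M\mathcal H_K$ (whose log-cardinality is $\mathcal O(n^d)$ by Lemma~\ref{DIM RELATION}) to an $\varepsilon$-net of the loss class, apply Lemma~\ref{CONCENTRATION INEQUALITY 1}, and finish with the AM--GM step $\sqrt{\varepsilon(\mathbf E(g)+\varepsilon)}\le \tfrac12\mathbf E(g)+\varepsilon$. The only discrepancy is a harmless constant: you take $B=8M^2$ (from $|g_f|\le 4M^2$), whereas the paper uses the looser factorization bound $|g|\le 8M^2$ and hence $B=16M^2$; the latter choice is what produces the denominator $128M^2$ in the stated exponent, while your constants would give the slightly sharper $112M^2$.
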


\begin{proof}
 Set
$$
             \mathcal F:=\{( f (X)-Y)^2-(f_\rho(X)-Y)^2:f\in \pi_M\mathcal
             H_K\}.
$$
Then for $g\in\mathcal F,$  there exists $f\in \mathcal H_K$ such
that $g({  Z})=(\pi_M f (X)-Y)^2-(f_\rho(X)-Y)^2$. Therefore,
$$
             \mathbf E(g)=\mathcal E(\pi_M f )-\mathcal E(f_\rho)\geq0,
             \ \ \frac1m\sum_{i=1}^mg(z_i)=\mathcal E_{\bf z}(\pi_M(f))-\mathcal E_{\bf z}
             (f_\rho).
$$
Since $|\pi_M f |\leq M$ and $|f_\rho(X)|\leq M$ almost everywhere,
we find that
$$
             |g({\bf
             z})|=|(\pi_Mf(X)-f_\rho(X))((\pi_Mf(X)-Y)+(f_\rho(X)-Y))|\leq8M^2
$$
almost everywhere. It follows that $|g({\bf z})-\mathbf
E(g)|\leq16M^2$ almost everywhere and
$$
             \mathbf E(g^2)
             \leq
             16M^2\|\pi_Mf-f_\rho\|^2_\rho=16M^2\mathbf E(g).
$$

Now we apply Lemma \ref{CONCENTRATION INEQUALITY 1} with $B=c=16M^2$
to the set of functions $\mathcal F$ and obtain that
\begin{equation}\label{RKHS important}
             \sup_{f\in \pi_M\mathcal H_K}\frac{\{\mathcal E(f)
             -\mathcal E(f_\rho)\}-\{\mathcal E_{\bf z}(f)
             -\mathcal E_{\bf z}(f_\rho)\}}{\sqrt{\{\mathcal E(f)
             -\mathcal E(f_\rho)\}+\varepsilon}}
             =
             \sup_{g\in \mathcal F}\frac{\mathbf E(g)-
             \frac1m\sum_{i=1}^mg(z_i)}{\sqrt{\mathbf E(g)+\varepsilon}}
             \leq
             \sqrt{\varepsilon}
\end{equation}
with confidence at least
$$
             1-\mathcal N(\mathcal F,\varepsilon)\mbox{exp}\left\{-\frac{3m\varepsilon}{128M^2}\right\}.
$$

Observe that for $g_1,g_2\in\mathcal F$ there exist $f_1,f_2\in
\pi_M\mathcal H_K$ such that
$$
             g_j({  Z})=(f_j(X)-Y)^2-(f_\rho(X)-Y)^2,\ j=1,2.
$$
In addition, for any $f\in \pi_M\mathcal H_K$, there holds
$$
             |g_1({  Z})-g_2({  Z})|
             =
             |(f_1(X)-Y)^2-(f_2(X)-Y)^2|
              \leq
              \leq4M\|f_1-f_2\|_\infty.
$$
 We see that for any $\varepsilon>0$, an
$\left(\frac{\varepsilon}{4M}\right)$-covering of $\pi_M\mathcal
H_K$ provides an $\varepsilon$-covering of $\mathcal F$. Therefore
$$
             \mathcal N(\mathcal F,\varepsilon)\leq
             \mathcal N\left(\pi_M\mathcal H_K,\frac{\varepsilon}{4M}\right).
$$
Then the confidence is
\begin{eqnarray*}
             1-\mathcal N(\mathcal F,\varepsilon)
             \exp\left\{-\frac{3m\varepsilon}{128M^2}\right\}
             &\geq&
             1-\mathcal N\left(\pi_M\mathcal H_K,
             \frac{\varepsilon}{4M}\right)\exp\left\{-\frac{3m\varepsilon}{128M^2}\right\}.
\end{eqnarray*}
 Since
$$
               \sqrt{\varepsilon}\sqrt{\{\mathcal E(\pi_Mf)-\mathcal E(f_\rho)\}+\varepsilon}
               \leq\frac12\{\mathcal E(\pi_Mf)-\mathcal E(f_\rho)\}+\varepsilon,
$$
it follows from (\ref{RKHS important}) and Lemma \ref{DIM RELATION}
that
$$
         \mathcal S_2\leq \frac12\mathcal E(\pi_M f_{{\bf z},\lambda})-\mathcal
         E(f_\rho)+\varepsilon
$$
holds with confidence at least
$$
              1-
              \exp\left\{cn^d\log\frac{4M^2}{\varepsilon}
              -\frac{3m\varepsilon}{128M^2}\right\}.
$$
This finishes the proof.
\end{proof}

Now we are in a position to deduce the final learning rate of the
kernel ridge regression  (\ref{RLS for Kn}). Firstly, it follows
from Propositions \ref{BOUND Dn}, \ref{BOUND S2} and \ref{BOUND S1}
that
\begin{eqnarray*}
            \mathcal
            E(\pi_Mf_{{\bf
           z},\lambda})-\mathcal E(f_\rho))
           &\leq&
            \mathcal D_n( \lambda)+\mathcal S_1+\mathcal S_2
           \leq
           C\left(n^{-2r}+\lambda M^2\right)\\
           &+&
           \frac12(\mathcal
            E(\pi_Mf_{{\bf
           z},\lambda})-\mathcal E(f_\rho))+2\varepsilon
\end{eqnarray*}
holds with confidence at least
$$
          1-\exp\left\{cn^d\log\frac{4M^2}{\varepsilon}
              -\frac{3m\varepsilon}{128M^2}\right\}
          -\exp\left(-\frac{3m\varepsilon^2}{48M^2\left(2\|f_n-f_\rho\|^2_{\rho}
            +\varepsilon\right)}\right).
$$
Then,  by setting  $\varepsilon\geq\varepsilon_+\geq C(m/\log
m)^{-2r/(2r+d)}$, $n=\left[c_0\varepsilon^{-1/(2r)}\right]$ and
$\lambda\leq M^{-2}\varepsilon$,   we get, with confidence at least
$$
        1-\exp\{-Cm\varepsilon\},
$$
there holds
$$
            \mathcal
            E(\pi_Mf_{{\bf
           z},\lambda})-\mathcal E(f_\rho)
           \leq
           4\varepsilon.
$$

The lower bound  can be more easily deduced. Actually, it can be
easily deduced from the Chapter 3 of  \cite{Devore2006}  that for
any estimator $f_{\bf z}\in \Phi_m$, there holds
$$
          \sup_{f_\rho\in W_r}\mathbf P_m\{{\bf z}:\|f_{\bf
          z}-f_\rho\|_\rho^2\geq\varepsilon\}\geq\left\{
          \begin{array}{cc}
          \varepsilon_0,  & \varepsilon< \varepsilon_-,\\
          e^{-cm\varepsilon}, & \varepsilon \geq\varepsilon_-,
          \end{array}
          \right.
$$
where $\varepsilon_0=\frac12$ and $\varepsilon_-=cm^{-2r/(2r+d)}$
for some universal constant $c$. With this,  the proof of Theorem
\ref{THEOREM 1} is completed.

\subsection{Proof of Theorem \ref{THEOREM 2}}
Before we proceed the proof, we at first present a simple
description of the methodology.  The methodology we adopted in the
proof of Theorem \ref{THEOREM 2} seems of novelty. Traditionally,
the generalization error of learning schemes in the sample dependent
hypothesis space (SDHS) is divided into the approximation,
hypothesis and sample errors (three terms) \cite{Wu2008}. All of the
aforementioned results about coefficient regularization in SDHS fall
into this style. According to \cite{Wu2008}, the hypothesis error
has been regarded as the reflection of nature of data dependence of
SDHS, and an indispensable part attributed to an essential
characteristic of learning algorithms in SDHS, compared with the
learning schemes in SIHS (sample independent hypothesis space). With
the needlet kernel $K_n$, we will  divide the generalization error
of $l^q$ kernel regularization  into the approximation and sample
errors (two terms) only.
 The core tool is needlet kernel's
excellent  localization properties in both the spacial and frequency
domain, with which the reproducing property, compressible property
and  the best approximation property can be guarantee. After
presenting
 a probabilistic cubature formula for spherical  polynomials, we can
prove that all the polynomials can be represented by via the SDHS.
This helps us to deduce the approximation error.  Since $\mathcal
H_{{\bf z},K}\subseteq\mathcal H_K$, the bound of the sample error
is as same as that in the previous subsection. Thus, We divide the
proof into three parts. The first one devotes to establish the
probabilistic cubature formula. The second one is to construct the
random approximant and study the approximation error. The third one
is to deduce the sample error and  derive the final learning rate.

To present the  probabilistic cubature formula, we need the
following two lemmas. The first one is the
 Nikolskii inequality for
spherical polynomials \cite{Mhaskar1999}.

\begin{lemma}\label{NIKOLSKII INEQUALITY}
Let $1\leq p<q\leq\infty,$ $n\geq1$ be an integer. Then
$$
          \|Q\|_{L^q(\mathbf
          S^{d})}\leq
          Cn^{\frac{d}{p}-\frac{d}{q}}\|Q\|_{L^p(\mathbf
          S^{d})},\ Q\in\Pi_s^{d}
$$
where the constant $C$ depends only on $d$.
\end{lemma}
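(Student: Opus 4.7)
The plan is to reduce the general statement to the extremal case $q=\infty$ and then establish the extremal case via the reproducing formula for the needlet kernel together with its spacial localization bound from Lemma \ref{Localization}. This is the standard route to Nikolskii-type inequalities on the sphere, and everything we need has already been set up in Section 2.

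First I would handle the case $q=\infty$. Fix $Q\in\Pi_n^{d}$. By Lemma \ref{PROPERTY}, $K_n$ reproduces elements of $\Pi_n^{d}$, so
$$
Q(x)=\int_{\mathbf S^d}K_n(x\cdot y)Q(y)\,d\omega(y).
$$
Applying H\"older's inequality with conjugate exponent $p'=p/(p-1)$,
$$
|Q(x)|\le \|K_n(x\cdot\,\cdot\,)\|_{L^{p'}(\mathbf S^d)}\,\|Q\|_{L^p(\mathbf S^d)}.
$$
By rotational invariance the first factor does not depend on $x$, and it can be computed in geodesic polar coordinates. Using Lemma \ref{Localization} with $r=0$ and $k$ chosen larger than $d/(p'-1)$ (or, if $p=1$, larger than $d$), and substituting $u=n\theta$, one obtains
$$
\|K_n(x\cdot\,\cdot\,)\|_{L^{p'}}^{p'}\le C\,n^{dp'}\!\int_0^{\pi}\!\frac{\theta^{d-1}}{(1+n\theta)^{kp'}}\,d\theta
\le C\,n^{d(p'-1)},
$$
so $\|K_n(x\cdot\,\cdot\,)\|_{L^{p'}}\le C\,n^{d/p}$ and therefore $\|Q\|_{L^\infty(\mathbf S^d)}\le C\,n^{d/p}\|Q\|_{L^p(\mathbf S^d)}$. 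The case $p=\infty$ is trivial.

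For the general case $1\le p<q\le\infty$, I would interpolate via the elementary bound
$$
\|Q\|_{L^q}^q=\int_{\mathbf S^d}|Q|^{q-p}|Q|^p\,d\omega\le \|Q\|_{L^\infty}^{\,q-p}\|Q\|_{L^p}^{p},
$$
giving $\|Q\|_{L^q}\le \|Q\|_{L^\infty}^{1-p/q}\|Q\|_{L^p}^{p/q}$. Substituting the estimate above yields
$$
\|Q\|_{L^q(\mathbf S^d)}\le C^{1-p/q}\,n^{d(1/p-1/q)}\|Q\|_{L^p(\mathbf S^d)},
$$
as required.

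The main obstacle is the uniform $L^{p'}$-bound on $K_n(x\cdot\,\cdot\,)$ with the sharp exponent $d/p$. The potential issue is that the localization bound of Lemma \ref{Localization} degrades near $\theta\sim 1/n$, so one must track both the bulk contribution ($\theta\lesssim 1/n$, which produces the factor $n^{d(p'-1)}$) and the tail ($\theta\gtrsim 1/n$, which converges precisely because $k$ may be chosen arbitrarily large in Lemma \ref{Localization}). Once this is handled, the interpolation step is purely elementary and the dependence of $C$ on $d$ (and not on $p,q,n$) is automatic.
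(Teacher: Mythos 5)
Your argument is correct, but it is genuinely different from what the paper does: the paper offers no proof of this lemma at all, simply quoting it as a known Nikolskii inequality from the reference \cite{Mhaskar1999}. What you have written is a self-contained derivation that reuses machinery already present in Section~2 --- the reproducing identity $Q=K_n*Q$ for $Q\in\Pi_n^d$ from Lemma~\ref{PROPERTY}, the spacial localization bound of Lemma~\ref{Localization} with $r=0$, and the elementary interpolation $\|Q\|_{L^q}\le\|Q\|_{L^\infty}^{1-p/q}\|Q\|_{L^p}^{p/q}$. The computation checks out: with $kp'>d$ the substitution $u=n\theta$ gives $\|K_n(x\cdot\,\cdot\,)\|_{L^{p'}}^{p'}\le Cn^{d(p'-1)}$, hence $\|Q\|_{L^\infty}\le Cn^{d/p}\|Q\|_{L^p}$, and the interpolation produces the exponent $d/p-d/q$ exactly. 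Two small remarks. First, your condition $k>d/(p'-1)$ is stronger than the $k>d/p'$ actually needed, but since $k$ may be taken arbitrarily large this is harmless; indeed choosing $k=d+1$ works uniformly for every $p'\in[1,\infty]$, which is worth saying explicitly because it makes the constant independent of $p$ and $q$, as the statement requires. Second, because your route goes through the needlet kernel, your constant also depends on the admissible function $\eta$; since $\eta$ is fixed once and for all this does not affect anything in the paper, but it means your proof establishes the inequality as a consequence of the needlet construction rather than as the kernel-free classical fact the paper is importing. What your approach buys is self-containedness and a nice illustration that the localization estimate of Lemma~\ref{Localization} already encodes the Nikolskii inequality; what the citation buys is brevity and a constant that manifestly depends only on $d$.
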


To state the next lemma, we need introduce the following
definitions. Let $\mathcal V$ be a finite dimensional vector space
with norm $\|\cdot\|_{\mathcal V}$, and $\mathcal U\subset \mathcal
V^*$ be a finite set. Here $\mathcal V^*$ denotes the dual space of
$\mathcal V$. We say that $\mathcal U$ is a norm generating set for
$\mathcal V$ if the mapping $T_{\mathcal U}: \mathcal
V\rightarrow\mathbf R^{Card(\mathcal U)}$ defined by $T_{\mathcal
U}(x)=(u(x))_{u\in \mathcal U}$ is injective, where $Card(\mathcal
U)$ is the cardinality of the set $\mathcal U$ and $T_{\mathcal U}$
is named as the sampling operator. Let $\mathcal W:=T_{\mathcal
U}(\mathcal V)$ be the range of $T_{\mathcal U}$, then the
injectivity of $T_{\mathcal U}$ implies that $T_{\mathcal
U}^{-1}:\mathcal W\rightarrow \mathcal V$ exists. Let $\mathbf
R^{Card(\mathcal U)}$ have a norm $\|\cdot\|_{\mathbf
R^{Card(\mathcal U)}}$, with $\|\cdot\|_{\mathbf R^{Card(\mathcal
U)^*}}$ being its dual norm on $\mathbf
 R^{Card(\mathcal U)^*}$. Equipping $\mathcal W$ with the induced norm, and let
 $\|T_{\mathcal U}^{-1}\|:=\|T_{\mathcal U}^{-1}\|_{\mathcal W\rightarrow \mathcal
 V}.$  In addition, let
 $\mathcal K_+$ be the positive cone of $\mathbf R^{Card(\mathcal U)}$: that is, all
 $(r_u)\in\mathbf R^{Card(\mathcal U)}$ for which $r_u\geq0$. Then
 the following Lemma \ref{NORMING SET} can be found in  \cite{Mhaskar2000}.

\begin{lemma}\label{NORMING SET}
  Let $\mathcal U$ be a norm generating set for $\mathcal V$,
with $T_{\mathcal U}$ being the corresponding sampling operator. If
$v\in \mathcal V^*$ with $\|v\|_{\mathcal V^*}\leq A$, then there
exist real numbers $\{a_u\}_{u\in \mathcal Z}$, depending only on
$v$ such that for every $t\in  \mathcal V,$
$$
                       v(t)=\sum_{u\in \mathcal U}a_uu(t),
$$
and
$$
               \|(a_u)\|_{\mathbf R^{Card(\mathcal U)^*}}\leq A\|T_{\mathcal U}^{-1}\|.
$$
Also, if $\mathcal W$ contains an interior point $v_0\in \mathcal
K_+$ and if $v(T_{\mathcal U}^{-1}t)\geq0$ when $t\in \mathcal V\cap
\mathcal K_+$, then we may choose $a_u\geq 0.$
\end{lemma}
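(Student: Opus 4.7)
The plan is to deduce the lemma from the Hahn--Banach extension theorem (for the norm statement) and from M.~Krein's extension theorem for positive linear functionals (for the non-negativity addendum), with $\tilde v := v\circ T_{\mathcal U}^{-1}$ on $\mathcal W$ playing the role of the functional to be extended.

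First I would define, using the injectivity of $T_{\mathcal U}$, the linear functional $\tilde v : \mathcal W \to \mathbf R$ by $\tilde v(w) := v(T_{\mathcal U}^{-1}w)$ for $w\in \mathcal W$. Equipping $\mathcal W$ with the norm induced from $\|\cdot\|_{\mathbf R^{Card(\mathcal U)}}$, the definition of $\|T_{\mathcal U}^{-1}\|$ and the assumption $\|v\|_{\mathcal V^*}\leq A$ immediately give
\[
|\tilde v(w)| \;\leq\; \|v\|_{\mathcal V^*}\,\|T_{\mathcal U}^{-1}w\|_{\mathcal V}\;\leq\; A\,\|T_{\mathcal U}^{-1}\|\,\|w\|_{\mathbf R^{Card(\mathcal U)}},
\]
so $\tilde v$ is a continuous linear functional on $\mathcal W$ with norm at most $A\|T_{\mathcal U}^{-1}\|$.

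Next I would apply the Hahn--Banach theorem to extend $\tilde v$ to a linear functional $\Lambda$ on all of $\mathbf R^{Card(\mathcal U)}$ with the same norm $\leq A\|T_{\mathcal U}^{-1}\|$. Identifying the dual of $(\mathbf R^{Card(\mathcal U)},\|\cdot\|)$ with $\mathbf R^{Card(\mathcal U)^*}$, this yields coefficients $(a_u)_{u\in \mathcal U}$ with $\Lambda(w)=\sum_{u\in\mathcal U}a_u w_u$ and $\|(a_u)\|_{\mathbf R^{Card(\mathcal U)^*}}\leq A\|T_{\mathcal U}^{-1}\|$. Specialising $w = T_{\mathcal U}(t) = (u(t))_{u\in\mathcal U}\in \mathcal W$ for $t\in\mathcal V$, one obtains $\sum_u a_u u(t) = \Lambda(T_{\mathcal U}t) = \tilde v(T_{\mathcal U}t) = v(t)$, which is the required representation. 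Note that the $a_u$ depend only on $v$ (and on the chosen extension), not on $t$, as the extension was performed once and for all.

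For the positivity addendum, I would view $\mathbf R^{Card(\mathcal U)}$ as an ordered vector space with positive cone $\mathcal K_+$ and $\mathcal W$ as an ordered subspace. The hypothesis $v(T_{\mathcal U}^{-1}t)\geq 0$ for $t\in \mathcal V\cap \mathcal K_+$ translates exactly into $\tilde v(w)\geq 0$ for every $w\in \mathcal W\cap \mathcal K_+$; thus $\tilde v$ is a positive linear functional on $\mathcal W$. Since $\mathcal W$ contains an interior point $v_0$ of $\mathcal K_+$, M.~Krein's extension theorem (the order analogue of Hahn--Banach for ordered vector spaces with an interior point in the cone on the subspace) provides a positive linear extension of $\tilde v$ to all of $\mathbf R^{Card(\mathcal U)}$, which is necessarily given by $\sum_u a_u w_u$ with $a_u\geq 0$. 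The main obstacle I anticipate is precisely this positivity extension: one must combine the positivity extension with the norm control, which requires noting that Krein's theorem may be applied first and the resulting non-negative coefficients then still satisfy the desired dual-norm estimate (e.g.\ by recombining through a sublinear gauge that simultaneously encodes the cone constraint and the norm). The norm-only part of the lemma is a routine Hahn--Banach argument, so the real content lies in this simultaneous control.
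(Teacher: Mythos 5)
Your argument is correct and is, in essence, \emph{the} proof of this lemma: the paper itself does not prove Lemma \ref{NORMING SET} but quotes it from \cite{Mhaskar2000}, where precisely this route is taken --- Hahn--Banach applied to $\tilde v = v\circ T_{\mathcal U}^{-1}$ on $\mathcal W$ to get the representation with the dual-norm bound $A\|T_{\mathcal U}^{-1}\|$, and Krein's extension theorem for positive functionals (using the interior point $v_0\in\mathcal K_+$ lying in $\mathcal W$) to get nonnegative weights. The only loose end you flag --- securing the norm bound and the sign condition simultaneously --- is immaterial for this paper, since in its sole application (Proposition \ref{RANDOM CUBATURE ON SPHERE}) only the norm estimate on $(a_i)$ is invoked and no positivity of the weights is claimed.
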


%

By the help of Lemma  \ref{BERNSTEIN INEQUALITY}, Lemma
\ref{NIKOLSKII INEQUALITY} and Lemma \ref{NORMING SET}
   we can deduce the following probabilistic
cubature formula.

\begin{proposition}\label{RANDOM CUBATURE ON SPHERE}
Let $N$ be a positive integer and $1\leq p\leq2$. If
$\Lambda_N:=\{t_i\}_{i=1}^N$ are i.i.d. random variables drawn
according to arbitrary distribution $\mu$ on $\mathbf S^d$, then
there exits a set of real numbers $\{a_i\}_{i=1}^N$ such that
$$
              \int_{\mathbf S^{d}}Q_n(x)d\omega(x)=\sum_{i=1}^Na_iQ_n(t_i)
$$
holds with confidence at least
$$
           1-2\exp\left\{-C\frac{ N }{D_{\rho_X}n^d}+Cn^d\right\},
$$
subject to
$$
                \sum_{i=1}^N|a_i|^p\leq\frac{|\mathbf S^d|}{1-\varepsilon}N^{1-p}.
$$
\end{proposition}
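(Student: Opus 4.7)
The strategy is to apply the norming-set extraction of Lemma \ref{NORMING SET} to the finite-dimensional vector space $\mathcal V=\Pi_n^d$ equipped with an $L^{p'}$-type norm (with $p'$ the H\"older conjugate of $p$), taking $\mathcal U=\{\delta_{t_i}\}_{i=1}^N$ as the random point-evaluation functionals. If, with the advertised probability, the sampling operator $T_{\mathcal U}$ is injective with $\|T_{\mathcal U}^{-1}\|$ under control, then Lemma \ref{NORMING SET} applied to the integration functional $v(Q)=\int_{\mathbf S^d}Q\,d\omega$ will produce coefficients $\{a_i\}_{i=1}^N$ realising the exact cubature formula for every $Q\in\Pi_n^d$. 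Equipping $\mathbf R^N$ with a suitably weighted $\ell^{p'}$ norm makes its dual norm a scalar multiple of $\bigl(\sum|a_i|^p\bigr)^{1/p}$, which after matching constants converts the abstract weight bound of Lemma \ref{NORMING SET} into the stated $\sum|a_i|^p\leq|\mathbf S^d|(1-\varepsilon)^{-1}N^{1-p}$ inequality.

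\textbf{Key step: a random Marcinkiewicz--Zygmund inequality.} The technical core is to prove that, with probability at least $1-2\exp\{-cN/(D_{\rho_X}n^d)+Cn^d\}$,
\[
(1-\varepsilon)\|Q\|_{L^{p'}(\mu)}^{p'}\leq \tfrac{1}{N}\sum_{i=1}^N|Q(t_i)|^{p'}\leq (1+\varepsilon)\|Q\|_{L^{p'}(\mu)}^{p'}\quad\text{for every } Q\in\Pi_n^d.
\]
Fix $Q$ with $\|Q\|_{L^{p'}(\mu)}=1$ and apply Lemma \ref{BERNSTEIN INEQUALITY} to $\xi_i=|Q(t_i)|^{p'}$. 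The Nikolskii inequality (Lemma \ref{NIKOLSKII INEQUALITY}) together with the distortion bound yields $\|Q\|_\infty\leq CD_{\rho_X}^{1/p'}n^{d/p'}$, so that $|\xi_i|\leq CD_{\rho_X}n^d$ and $\sigma^2(\xi_i)\leq\|Q\|_\infty^{p'}\cdot\mathbf E\xi_i\leq CD_{\rho_X}n^d$; Bernstein then gives the pointwise tail $2\exp\{-cN\varepsilon^2/(D_{\rho_X}n^d)\}$. Since $\dim\Pi_n^d\sim n^d$, the unit ball of $(\Pi_n^d,\|\cdot\|_{L^{p'}(\mu)})$ admits an $\eta$-net of cardinality $\exp(Cn^d\log(1/\eta))$; Nikolskii again produces a Lipschitz constant of polynomial size in $n$ for the map $Q\mapsto\tfrac{1}{N}\sum|Q(t_i)|^{p'}$, so a union bound with $\eta$ a suitable power of $n^{-1}$ delivers the claimed exponent $-cN/(D_{\rho_X}n^d)+Cn^d$ after absorbing the fixed $\varepsilon$ into the constants.

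\textbf{Extraction and the main obstacle.} On the MZ event above, $T_{\mathcal U}$ is injective with $\|T_{\mathcal U}^{-1}\|\leq(1-\varepsilon)^{-1/p'}$ in the norms described, while H\"older bounds the dual norm of $v$ by $|\mathbf S^d|^{1/p}$ (up to a harmless distortion factor). Lemma \ref{NORMING SET} then provides $(a_i)$ with $\int_{\mathbf S^d}Q\,d\omega=\sum_i a_iQ(t_i)$ for every $Q\in\Pi_n^d$ together with
\[
\bigl(\tfrac{N}{|\mathbf S^d|}\bigr)^{1/p'}\Bigl(\sum_{i=1}^N|a_i|^p\Bigr)^{1/p}\leq \tfrac{|\mathbf S^d|^{1/p}}{(1-\varepsilon)^{1/p'}},
\]
and raising to the $p$-th power and simplifying (using $p/p'=p-1$) recovers the stated bound. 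The delicate point I expect is the uniform step in the MZ argument: the net cost $Cn^d$ in the exponent must be absorbed by the Bernstein gain $cN/(D_{\rho_X}n^d)$, which dictates the precise shape of the probability estimate, and the distortion $D_{\rho_X}$ must be threaded through so that it surfaces only in the probability exponent and not in the final coefficient constant.
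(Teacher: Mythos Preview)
Your high-level strategy---establish a random Marcinkiewicz--Zygmund inequality on $\Pi_n^d$ via Bernstein plus a covering argument, then invoke Lemma \ref{NORMING SET} on the integration functional---is exactly the route the paper takes. Two technical choices differ, and one of them creates a real gap.

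First, the paper uniformises by a chaining argument: it takes dyadic nets $\mathcal A(2^{-l})$, writes $\eta_i(Q_n)=\eta_i(P_0)+\sum_{l\geq0}(\eta_i(P_{l+1})-\eta_i(P_l))$, and applies Bernstein separately to $\eta_i(P_0)$ and to the increments. Your single-net-plus-Lipschitz step would work, but the Lipschitz constant of $Q\mapsto\tfrac1N\sum_i|Q(t_i)|^{p'}$ on the unit ball is polynomial in $n$, forcing $\eta\sim n^{-\alpha}$ and hence an extra $\log n$ in the exponent $Cn^d\log(1/\eta)$. Given how loose the stated constants are this is minor, but it is the reason the paper chains.

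Second, and this is the substantive issue, the paper carries out the entire MZ argument in $L^2_{\rho_X}$: it proves $\tfrac12\|Q\|_\rho^2\leq\tfrac1N\sum_i|Q(t_i)|^2\leq\tfrac32\|Q\|_\rho^2$, applies Lemma \ref{NORMING SET} with the weighted $\ell^2$ norm on $\mathbf R^N$ to obtain $\tfrac1N\sum_i(N|a_i|)^2\leq2$, and only then uses H\"older to pass to $\sum_i|a_i|^p\leq CN^{1-p}$ for $1\leq p\leq2$. Your plan to run everything directly in $L^{p'}$ has two problems. At the endpoint $p=1$ one has $p'=\infty$, and the expression $\tfrac1N\sum_i|Q(t_i)|^{p'}$ is meaningless, so the MZ step collapses. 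For $1<p<2$, the distortion $D_{\rho_X}$ in this paper is by definition the operator norm of $J:L^2_{\rho_X}\to L^2(\mathbf S^d)$ only, so the step ``$\|Q\|_\infty\leq CD_{\rho_X}^{1/p'}n^{d/p'}$'' is not available: Lemma \ref{NIKOLSKII INEQUALITY} is stated for Lebesgue measure, and there is no $L^{p'}$ distortion hypothesis to transfer it to $\mu$. Routing through $L^2$ instead (Jensen from $L^{p'}(\mu)$ down to $L^2(\mu)$, then distortion, then Nikolskii) yields $\|Q\|_\infty\lesssim D_{\rho_X}n^{d/2}$ and hence $|\xi_i|\lesssim D_{\rho_X}^{p'}n^{dp'/2}$, which for $p'>2$ degrades the Bernstein denominator and no longer matches the claimed probability. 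The fix is simply to do what the paper does: prove the MZ inequality once in $L^2$ and defer the passage to general $p$ to a one-line H\"older on the weights at the very end.
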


\begin{proof}
   Without loss of generality, we assume
$Q_n\in\mathcal P^0:=\{f\in\Pi_n^d:\|f\|_\rho\leq 1\}$. We denote
the $\delta$-net of all $f\in\mathcal P^0$, by $\mathcal A(\delta)$.
It follows from     \cite[Chap.9]{Gyorfi2002} and the definition of
the covering number that the smallest cardinality of $\mathcal
A(\delta)$ is bounded by
$$
       \exp\{Cn^d\log1/\delta\}.
$$

 Given
$Q_n\in\mathcal P^0$. Let $P_j$ be the polynomial in $\mathcal
A(2^{-j})$ which is closet to $Q_n$ in the uniform norm, with some
convention for breaking ties. Since $\|Q_n-P_j\|\rightarrow0$, with
the denotation $\eta_i(P)=|P(t_i)|^2-\|P\|_\rho^2,$ we can write
$$
       \eta_i(P)=\eta_i(P_0)+\sum_{l=0}^\infty\eta_i(P_{l+1})-\eta_i(P_l).
$$

Since   the sampling set $\Lambda_N$ consists of  a sequence of
i.i.d. random variables on $\mathbf S^{d}$, the sampling points are
a sequence of functions $t_j=t_j(\omega)$ on some probability space
$(\Omega,\mathbf P)$. If we set $\xi_j^2(P)=|P(t_j)|^2$, then
$$
           \eta_i(P)=|P(t_i)|^2-\|P\|_{\rho}^2
           =|P(t_i)(\omega)|^2-
             \mathbf E\xi_j^2,
$$
where we have used the equalities
$$
            \mathbf E\xi_j^2
            =\int_{\mathbf S^d}|P(x)|^2  d\rho_X=\|P\|_{\rho}^2.
$$
Furthermore,
$$
            |\eta_i(P)|\leq\sup_{\omega\in\Omega}
            \left||P( t_i(\omega))|^2-\|P\|_{\rho}^2\right|
            \leq \|P\|_{\infty}^2+\|P\|_{\rho}^2.
$$
It follows from Lemma \ref{NIKOLSKII INEQUALITY} that
$$
               \|P\|_{\infty}\leq Cn^{\frac{d}{2}}\|P\|_{2}.
$$
Hence
$$
             |\eta_i(P)-\mathbf E\eta_i(P)|\leq CD_{\rho_X}n^d.
$$
Moreover, using Lemma \ref{NIKOLSKII INEQUALITY} again, there holds,
$$
              \sigma^2(\eta_i(P))\leq\mathbf E((\eta_i(P))^2)\leq
              \|P\|_\infty^{2}\|P\|_{\rho}^{2}-\|P\|_{2}^{4}\leq CD_{\rho_X}n^d.
$$
Then, using Lemma \ref{BERNSTEIN INEQUALITY} with $\varepsilon=1/2$
and $M_\xi=\sigma^2=Cn^d$, we have for fixed $P\in \mathcal A(1)$,
with  probability at most  $2\exp\{-CN/D_{\rho_X}n^d\}$, there holds
$$
          \left|\frac1N\sum_{i=1}^N\eta_i\right|\geq\frac14.
$$
Noting there are at $\exp\{Cn^d\}$ polynomials in $\mathcal A(1)$,
we get
\begin{equation}\label{Probability1}
    \mathbf P^N\left\{ \frac1N\sum_{i=1}^N|\eta_i(N)\geq \frac14\ \mbox{  for some}\ P\in\mathcal
A(1)\right\}\leq
2\exp\left\{-\frac{CN}{D_{\rho_X}n^d}+Cn^d\right\}.
\end{equation}

Now, we aim to bound the probability of the event:

   ({\bf e}1) for some
$l\geq1$, some $P\in\mathcal A(2^{-l})$ and some $Q\in\mathcal
A(2^{-l+1})$ with $\|p-q\|\leq 3\times2^{-l}$, there holds
$$
       |\eta_i(P)-\eta_i(Q)|\geq\frac{1}{4(l+1)^2}.
$$

The main tool is also the Bernstein inequality. To this end, we
should bound $|\eta_i(P)-\eta_i(Q)-\mathbf E(\eta_i(P)-\eta_i(Q))$
and the variance $\sigma^2(\eta_i(P)-\eta_i(Q))$.  According to the
Taylor formula
$$
           a^2=b^2+(a+b)(a-b),
$$
and Lemma \ref{NIKOLSKII INEQUALITY},  we have
\begin{eqnarray*}
  \|\eta_i(P)-\eta_i(Q)\|
  &\leq&
  \sup_{\omega\in\Omega}\left||P(t_i(\omega))|^2-|Q(t_i(\omega))|^2\right|
  +
  |\|Q\|_\rho^2-\|P\|_\rho^2|\\
  &\leq&
  CD_{\rho_X}n^d\|P-Q\|,
\end{eqnarray*}
and
\begin{eqnarray*}
   \sigma^2(\eta_i(P)-\eta_i(Q))
   &\leq&
   \mathbf E((\eta_i(P)-\eta_i(Q))^2)\\
   &=&\int_{\mathbf
   S^d}(|P(x)|^2-|Q(x)|^2)^2d\rho_X-(\|P\|_\rho^2-\|Q\|_\rho^2)^2\\
   &\leq&
   CD_{\rho_X}n^d\|P-Q\|^2.
\end{eqnarray*}
If $P\in\mathcal A(2^{-l})$ and $Q\in\mathcal A(2^{-l+1})$ with
$\|P-Q\|\leq  3\times2^{-l}$, then it follows from Lemma
\ref{BERNSTEIN INEQUALITY} again that,
\begin{eqnarray*}
    \mathbf
    P^N\left(\left|\sum_{i=1}^N\eta_i(P)-\eta_i(Q)\right|>\frac1{4(l+1)^2}\right)
     &\leq&
    2\exp\left\{-\frac{N}{CD_{\rho_X}n^d(2^{-2l}l^4+2^{-l}l^2)}\right\}\\
    &\leq&
    2\exp\left\{-\frac{N}{CD_{\rho_X}n^d 2^{-l/2}}\right\}
\end{eqnarray*}
Since there are at most $2\exp\{-Cn^d\log l\}$ polynomials in
$\mathcal A(2^{-l})\cup\mathcal A(2^{-l+1})$, then the event ({\bf
e}1) holds with probability at most
$$
       \sum_{l=1}^\infty2\exp\left\{-\frac{CN}{ D_{\rho_X}n^d
       2^{-l/2}}+Cn^d\log l\right\}
       \leq
       \sum_{l=1}^\infty2\exp\left\{-2^{l/2}\left(\frac{C N}{ D_{\rho_X}n^d
      }-n^d\right)\right\}.
$$
Since $\sum_{i=1}^\infty e^{-a^lb}\leq Ce^{-b}$ for any $a>1$ and
$b\geq 1$, we then deduce that
\begin{equation}\label{Probability2}
        \mathbf P^m \{\mbox{The event ({\bf e}1) holds} \}\leq
        2\exp\left\{\frac{CN}{D_{\rho_X}n^d}-Cn^d\right\}.
\end{equation}
Thus, it follows from (\ref{Probability1}) and (\ref{Probability2})
that with confidence at least
$$
         1-2\exp\left\{\frac{CN}{D_{\rho_X}n^d}-Cn^d\right\}
$$
there holds
\begin{eqnarray*}
       \left|\sum_{i=1}^n\eta_i(Q_n)\right|
       &\leq&
       \left|\sum_{i=1}^n\eta_i(P_0)\right|+
       \sum_{l=1}^\infty|\sum_{i=1}^n\eta_i(P_l)-\eta_i(P_l)|\\
       &\leq&
       \frac14+\sum_{l=1}^\infty\frac1{4(l+1)^2}
       =
       \sum_{l=1}^\infty\frac1{4l^2}=\frac{\pi^2}{24}<\frac12.
\end{eqnarray*}
This means that with confidence at least
$$
         1-2\exp\left\{\frac{C N}{D_{\rho_X}n^d}-Cn^d\right\}
$$
there holds
\begin{equation}\label{MZ}
           \frac12\|Q_n\|_{\rho}^2
           \leq
           \frac1N\sum_{i=1}^N|Q_n(\alpha_i)|^2\leq\frac32\|Q_n\|_{\rho}^2\quad
           \forall Q_n\in\Pi_n^{d}.
\end{equation}

Now, we use (\ref{MZ}) and Lemma \ref{NORMING SET} to prove Lemma
\ref{RANDOM CUBATURE ON SPHERE}. In Lemma \ref{NORMING SET}, we take
$\mathcal V=\Pi_n^{d}$, $\|Q_n\|_{\mathcal V}=\|Q_n\|_{\rho}$, and
$\mathcal W$ to be the set of point evaluation functionals
$\{\delta_{t_i}\}_{i=1}^N$. The operator $T_{\mathcal W}$ is then
the restriction map $Q_n\mapsto Q_n|_{\Lambda},$ with
$$
            \|f\|_{\Lambda,2}^2:=
             \left(\frac1N\sum_{i=1}^N|f(t_i)|^p\right)^\frac12.
$$
 It follows from
(\ref{MZ})   that with confidence at least
$$
         1-2\exp\left\{\frac{CN}{D_{\rho_X}n^d}-Cn^d\right\}
$$
there holds $\|T_{\mathcal W}^{-1}\|\leq2. $ We now take $u$ to be
the functional
$$
              y: Q_n \mapsto \int_{\mathbf S^{d}}Q_n(x) d\rho_X.
$$
By H\"{o}lder inequality, $\|y\|_{\mathcal V^*}\leq 1$. Therefore,
Lemma \ref{NORMING SET} shows that
$$
              \int_{\mathbf
              S^{d}}Q_n(x)d\omega(x)=\sum_{i=1}^Na_iQ_n(t_i)
$$
holds with confidence at least
$$
         1-2\exp\left\{\frac{C N}{D_{\rho_X}n^d}-Cn^d\right\}
$$
subject to
$$
                \frac1N\sum_{i=1}^N\left(\frac{|a_i|}{1/N}\right)^2
                \leq2.
$$
Then,   the H\"{o}lder   finishes the proof of Proposition
\ref{RANDOM CUBATURE ON SPHERE}.
\end{proof}

 To estimate the upper bound of
 $$          \mathcal E(\pi_Mf_{{\bf z},\lambda,q})-\mathcal
 E(f_\rho),
 $$
 we first
introduce an error decomposition strategy. It follows from the
definition of $f_{{\bf z},\lambda,q}$ that, for arbitrary
$f\in\mathcal H_{K,{\bf z}}$,
\begin{eqnarray*}
            \mathcal E( \pi_Mf_{{\bf z},\lambda,q})-\mathcal E(f_\rho)
            &\leq&
            \mathcal E( \pi_Mf_{{\bf z},\lambda,q})-\mathcal
            E(f_\rho)+\lambda\Omega_{\bf z}^q(f_{{\bf
            z},\lambda,q})\\
            &\leq&
            \mathcal E( \pi_Mf_{{\bf z},\lambda,q})-\mathcal E_{\bf z}( f_{{\bf
            z},\lambda,q})+\mathcal E_{\bf z}(f)-\mathcal E(f)\\
            &+&
            \mathcal E_{\bf z}( \pi_Mf_{{\bf z},\lambda,q})+\lambda\Omega_{\bf z}^q(\pi_Mf_{{\bf
            z},\lambda,q})-\mathcal E_{\bf z}(f)-\lambda\Omega_{\bf
            z}^q(f)\\
            &+&
            \mathcal E(f)-\mathcal E(f_\rho)+\lambda\Omega_{\bf
            z}^q(f)\\
             &\leq&
             \mathcal E(\pi_M f_{{\bf z},\lambda,q})-\mathcal E_{\bf z}(\pi_M f_{{\bf
            z},\lambda,q})+\mathcal E_{\bf z}(f)-\mathcal E(f)\\
            &+&
            \mathcal E(f)-\mathcal E(f_\rho)+\lambda\Omega_{\bf
            z}^q(f).
\end{eqnarray*}

Since $f_\rho\in W_r$ with $r>\frac{d}2$, it follows from the
Sobolev embedding theorem and Jackson inequality \cite{Brown2005}
that
   there exists a
$P_\rho\in\Pi_n^d$ such that
\begin{equation}\label{app1}
             \|P_\rho\|\leq c\|f_\rho\| \quad\mbox{and}\quad
             \|f_\rho-P_\rho\|^2\leq Cn^{-2r}.
\end{equation}
Then we have
\begin{eqnarray*}
            \mathcal E(f_{{\bf z},\lambda,q})-\mathcal E(f_\rho)
            &\leq&
            \left\{\mathcal E(P_\rho)-\mathcal E(f_\rho)+\lambda\Omega_{\bf
            z}^q(P_\rho)\right\}\\
            &+&
             \left\{\mathcal E(f_{{\bf z},\lambda,q})-\mathcal E_{\bf z}(f_{{\bf
            z},\lambda,q})+\mathcal E_{\bf z}(P_\rho)-\mathcal
            E(P_\rho)\right\}\\
            &=:&
            \mathcal D({\bf z},\lambda,q)+\mathcal S({\bf
            z},\lambda,q),
\end{eqnarray*}
where $\mathcal D({\bf z},\lambda,q)$ and $\mathcal S({\bf
z},\lambda,q)$ are called as the approximation error and  sample
error, respectively. The following Proposition \ref{APPROXIMATION
ERROR} presents an upper bound for the approximation error.

\begin{proposition}\label{APPROXIMATION ERROR}
Let $m,n\in\mathbf N$, $r>d/2$ and $f_\rho\in W_r$. Then, with
confidence at least $1-2\exp\{{-cm/(D_{\rho_X}n^d)}\},$ there holds
$$
          \mathcal D({\bf
          z},\lambda,q)\leq
          C\left(n^{-2r}+2\lambda m^{1-q}\right),
$$
where $C$ and $c$ are constants depending only on $d$ and $r$.
\end{proposition}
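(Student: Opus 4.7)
\textbf{Proof plan for Proposition \ref{APPROXIMATION ERROR}.}

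The plan is to exploit the definition $\mathcal D({\bf z},\lambda,q)=\{\mathcal E(P_\rho)-\mathcal E(f_\rho)\}+\lambda\Omega_{\bf z}^q(P_\rho)$ and bound the two pieces separately. For the first piece, by \eqref{equality} and the fact that the $L^2_{\rho_X}$ norm is dominated by the uniform norm on $\mathbf S^d$, the Jackson estimate \eqref{app1} (or Lemma \ref{JACKSON}) directly yields $\mathcal E(P_\rho)-\mathcal E(f_\rho)=\|P_\rho-f_\rho\|_\rho^2\leq\|P_\rho-f_\rho\|^2\leq Cn^{-2r}$. This part is deterministic and contributes the $n^{-2r}$ summand. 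All the work therefore lies in bounding $\Omega_{\bf z}^q(P_\rho)$, i.e., exhibiting a representation $P_\rho=\sum_{i=1}^m b_i K_n(x_i,\cdot)$ with $\sum_i|b_i|^q$ controlled.

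The construction will use the reproducing property of Lemma \ref{PROPERTY} together with Proposition \ref{RANDOM CUBATURE ON SPHERE}. First, since $P_\rho\in\Pi_n^d$, the reproducing property gives $P_\rho(x)=\int_{\mathbf S^d}K_n(x\cdot y)P_\rho(y)\,d\omega(y)$. For each fixed $x$, the integrand $y\mapsto K_n(x\cdot y)P_\rho(y)$ is a polynomial in $\Pi_{2n}^d$. Applying Proposition \ref{RANDOM CUBATURE ON SPHERE} at degree $2n$ with $N=m$ and the sampling set $\{x_i\}_{i=1}^m$, one obtains, with confidence at least $1-2\exp\{-cm/(D_{\rho_X}n^d)\}$, weights $\{a_i\}_{i=1}^m$ (independent of $x$) such that
\begin{equation*}
P_\rho(x)=\sum_{i=1}^m a_i\,K_n(x\cdot x_i)P_\rho(x_i)\qquad\forall x\in\mathbf S^d,
\end{equation*}
with a bound $\sum_{i=1}^m |a_i|^p\leq C m^{1-p}$ available for any $p\in[1,2]$. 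Setting $b_i:=a_i P_\rho(x_i)$ produces a valid expansion of $P_\rho$ in $\mathcal H_{K,{\bf z}}$, hence $\Omega_{\bf z}^q(P_\rho)\leq\sum_{i=1}^m|a_i|^q|P_\rho(x_i)|^q\leq \|P_\rho\|_\infty^q\sum_{i=1}^m|a_i|^q$. Since $\|P_\rho\|_\infty\leq c\|f_\rho\|_\infty\leq cM$ from \eqref{app1}, the job reduces to bounding $\sum_i|a_i|^q$ by $Cm^{1-q}$.

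For $1\leq q\leq 2$, I would simply invoke the cubature formula with $p=q$, giving the required $\sum_i|a_i|^q\leq C m^{1-q}$ directly. For $0<q<1$, the $\ell^q$ quasi-norm is no longer accessible from the cubature formula, so I would invoke the formula with $p=1$ and then use the elementary inequality $\sum_{i=1}^m|a_i|^q\leq m^{1-q}\bigl(\sum_{i=1}^m|a_i|\bigr)^q\leq C m^{1-q}$ (an application of Jensen's inequality to the concave function $t\mapsto t^q$). This uniform handling of the two regimes of $q$ is the only subtle point in the proof. Combining, $\lambda\Omega_{\bf z}^q(P_\rho)\leq C\lambda m^{1-q}$ on the same high-probability event, and adding the $Cn^{-2r}$ from the first step completes the bound. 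The main obstacle I anticipate is keeping track of the degrees when applying the cubature formula (working in $\Pi_{2n}^d$ rather than $\Pi_n^d$ so that the product $K_n(x\cdot\cdot)P_\rho$ is captured exactly) and ensuring the confidence statement from Proposition \ref{RANDOM CUBATURE ON SPHERE} is preserved at this enlarged degree, which only affects the constants.
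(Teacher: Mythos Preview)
Your proposal is correct and follows essentially the same route as the paper: use the reproducing identity $P_\rho=\int K_n(\cdot,y)P_\rho(y)\,d\omega(y)$, discretize via the probabilistic cubature of Proposition~\ref{RANDOM CUBATURE ON SPHERE} at the sample points to obtain $P_\rho=\sum_i a_iP_\rho(x_i)K_n(x_i,\cdot)\in\mathcal H_{K,{\bf z}}$, and then bound $\Omega_{\bf z}^q(P_\rho)$ using $\|P_\rho\|_\infty\leq cM$ together with the $\ell^p$ control on $(a_i)$. The paper handles the passage to $\sum_i|a_i|^q\leq Cm^{1-q}$ for all $0<q\leq 2$ in one stroke via H\"older from the $\ell^2$ bound, whereas you split into the ranges $q\in[1,2]$ and $q\in(0,1)$; both are equivalent. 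One small correction: since $K_n(x\cdot y)$ has degree $2n$ in $y$ and $P_\rho\in\Pi_n^d$, the product lies in $\Pi_{3n}^d$, not $\Pi_{2n}^d$; as you already note, this only perturbs constants in the confidence bound.
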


\begin{proof}
  From Lemma \ref{PROPERTY}, it
is easy to deduce that
$$
              P_\rho(x)=\int_{\mathbf S^d}P_\rho(x')K_{n}(x,x')d\omega(x').
$$
Thus, Proposition \ref{RANDOM CUBATURE ON SPHERE}, H\"{o}lder
inequality
 and $r>d/2$ yield  that with confidence at least
$1-2\exp\{{-cm/n^d}\},$ there exists a set of real numbers
$\{a_i\}_{i=1}^m$ satisfying $\sum_{i=1}^m|a_i|^q\leq 2m^{1-q}$ for
$q>0$ such that
$$
            P_\rho(x)=\sum_{i=1}^ma_iP_\rho(x_i)K_n(x_i,x).
$$
The above observation together with (\ref{app1}) implies that with
confidence at least $1-2\exp\{{-cm/(D_{\rho_X}n^d)}\},$  $P_\rho$
can be represented as
$$
        P_\rho(x)=\sum_{i=1}^ma_iP_\rho(x_i)K_n(x_i,x)\in\mathcal
         H_{K,{\bf z}}
$$
such that for arbitrary $f_\rho\in W_r$, there holds
$$
             \|P_\rho-f_\rho\|_\rho^2\leq\|P_\rho-f_\rho\|^2\leq Cn^{-2r},
$$
and
$$
         \Omega_{\bf z}^q(P_\rho)\leq\sum_{i=1}^m|a_iP_\rho(x_i)|^q\leq
         (cM)^q\sum_{i=1}^m|a_i|^q\leq
         2|\mathbf S^d|m^{1-q},
$$
where $C$ is a constant depending only on $d$ and $M$.
It thus implies that the inequalities
\begin{equation}\label{approximation error}
          \mathcal D({\bf
          z},\lambda,q)\leq\|P_\rho-f_\rho\|_\rho^2+\lambda\Omega_{\bf
          z}^q(g^*)\leq
          C\left(n^{-2r}+2\lambda m^{1-q}\right)
\end{equation}
holds with confidence at least $1-2\exp\{{-cm/(D_{\rho_X}n^d)}\}.$
\end{proof}

At last, we  deduce the final learning rate of $l^q$ kernel
regularization schemes (\ref{algorihtm1}). Firstly, it follows from
Propositions \ref{APPROXIMATION ERROR}, \ref{BOUND S1} and
\ref{BOUND S2} that
\begin{eqnarray*}
            \mathcal
            E( \pi_Mf_{{\bf
           z},\lambda,q})-\mathcal E(f_\rho))
           &\leq&
            \mathcal D({\bf z},\lambda,q)+\mathcal S_1^q+\mathcal
            S_2^q
           \leq
           C\left(n^{-2r}+\lambda m^{1-q}\right)\\
           &+&
           \frac12(\mathcal
            E(f_{{\bf
           z},\lambda,q})-\mathcal E(f_\rho))+2\varepsilon
\end{eqnarray*}
holds with confidence at least
\begin{eqnarray*}
           1-4\exp\{{-cm/(D_{\rho_X}n^d)}\}
         -\exp\left\{cn^d\log\frac{4M^2}{\varepsilon}
              -\frac{3m\varepsilon}{128M^2}\right\}
          -\exp\left(-\frac{3m\varepsilon^2}{48M^2\left(2n^{-2r}
            +\varepsilon\right)}\right).
\end{eqnarray*}

Then,  by setting  $\varepsilon\geq\varepsilon_m^+\geq C(m/\log
m)^{-2r/(2r+d)}$, $n=\left[\varepsilon^{-1/(2r)}\right]$ and
$\lambda\leq m^{q-1}\varepsilon$,     it follows from $r>d/2$ that
           \begin{eqnarray*}
          &&1-5\exp\{-CD_{\rho_X}^{-1}m\varepsilon^{d/(2r)}\}-\exp\{-Cm\varepsilon\}\\
          &-&
          \exp\left\{C\varepsilon^{-d/(2r)}\left(\log1/\varepsilon+\log m\right)
          -Cm\varepsilon)\right\}\\
          &\geq&
          1-6\exp\{-Cm\varepsilon\}.
\end{eqnarray*}
That is, for  $\varepsilon\geq \varepsilon_m^+$,
$$
            \mathcal
            E(f_{{\bf
           z},\lambda,q})-\mathcal E(f_\rho)
           \leq
           6\varepsilon
$$
holds with confidence at least
$1-6\exp\{-CD_{\rho_X}^{-1}m\varepsilon\}$. The same method as
\cite[P.37]{Devore2006} and the fact that the uniform distribution
satisfies $D_{\rho_X}<\infty$ yields the lower bound of
(\ref{Theorem 2}). This finishes the proof of Theorem \ref{THEOREM
2}.

\section{Conclusion and discussion}
Since its inception in \cite{Narcowich2006}, needlets have become
the most popular tools to tackle spherical data due to its perfect
localization performance in both the frequency and spacial domains.
The main novelty  of the present paper is to suggest the usage of
the needlet kernel in kernel methods to deal with spherical data.
Our contributions   can be summarized as follows. Firstly, the model
selection problem of the kernel ridge regression boils down to
choosing a suitable kernel and the corresponding regularization
parameter. Namely, there are totally two types parameters in the
kernel methods. This requires relatively large amount of
computations when faced with large-scaled data sets. Due to needlet
kernel's excellent localization property in the frequency domain, we
prove that, if a truncation operator is added to the final estimate,
then as far as the model selection is concerned, the regularization
parameter is not necessary in the sense of rate optimality. This
means that there is only a discrete parameter, the frequency of the
needlet kernel, needs tuning  in the learning process, which
presents a theoretically guidance to reduce the computation burden.
Secondly, Compared with the kernel ridge regression, $l^q$ kernel
regularization learning, including the kernel lasso estimate and
kernel bridge estimate, may bring  a certain additional attribution
of the estimator, such as the sparsity. When utilized the $l^q$
kernel regularization learning, the focus is to judge whether it
degrades the generalization capability of the kernel ridge
regression. Due to needlet kernel's excellent localization property
in the spacial domain,  we have proved in this paper that, on the
premise of embodying the feature of   the $l^q$ $ (0< q\leq 2)$
kernel regularization learning, the selection of $q$ doesn't affect
the generalization error  in the sense of rate optimality. Both of
them showed  that the needlet kernel is an good choice of the kernel
method to deal with spherical data.

We conclude this paper with the following important remark.

\begin{remark}
There are two types of polynomial kernels for spherical data
learning: the localized kernels and non-localized kernels. For the
non-localized kernels,  there are three papers focused on its
applications in nonparametric regression. \cite{Minh2006} is the
first one to derive the learning rate of KRR associated with the
polynomial kernel $(1+x \cdot x')^n$. However their learning rate
were built upon the assumption that $f_\rho$ is a polynomial.
\cite{Li2009} omitted this assumption  by using the eigenvalue
estimate of the polynomial kernel. But the derived learning rate of
\cite{Li2009} is not optimal. \cite{Cao2013} conducted a learning
rate analysis for KRR associated the reproducing kernel of the space
$(\Pi_n^d,L_2(\mathbf S^d))$ and derived the similar learning rate
as \cite{Li2009}. In a nutshell, for the spherical data learning, to
the best of our knowledge, there didn't exist almost   optimal
minimax learning rate analysis for KRR associated with non-localized
kernels. Using the methods in the present paper, especially the
technique in bounding the sampling error, we can improve the results
in \cite{Cao2013} and \cite{Li2009} to the almost optimal minimax
learning rates. For the localized kernels, such as the kernels
proposed in \cite{Brown2005,Filbir2004,LeGia2008,Mhaskar2005}, we
can derive similar results as the needlet kernel in this paper. That
is, the almost optimal learning rates of KRR and $l_q$ KRS can be
derived for these kernels by using the same method in the paper.
Since needlets' popularity in statistics and real world
applications, we only present the learning rate analysis for the
needlet kernel. Finally, it should be pointed out that when
$y_i=f_\rho(x_i)$, the learning rate of the least squares (KRR with
$\lambda=0$) associated with a localized kernel was derived in
\cite{LeGia2008}. The most important difference between our paper
and \cite{LeGia2008} is we are faced with nonparametric regression
problem, while \cite{LeGia2008} focused on the approximation
problems.

\end{remark}


\begin{thebibliography}{1}

\bibitem{Abrial2008}
\textsc{Abrial, P.}, \textsc{Moudden, Y.}, \textsc{Starck, J.},
\textsc{Delabrouille, J.} and \textsc{Nguyen, M.} (2008). CMB data
analysis and sparsity. \textit{Statis. Method.} \textbf{5} 289--298.




\bibitem{Baldi2008}
\textsc{Baldi, P.}, \textsc{Kerkyacharian, G.}, \textsc{Marinucci,
D.} and {Picard, D.} (2008). Asymptotics for spherical needlets.
\textit{Ann. Statist.} \textbf{37} 1150--1171.

%



\bibitem{Bickel2007}
\textsc{Bickel, P.} and \textsc{Li, B.} (2007). Local polynomial
regression on unknown manifolds. \textit{Lecture Notes-Monograph
Series} \textbf{54} 177--186.

\bibitem{Brown2005}
\textsc{ Brown, G.} and \textsc{Dai, F.} (2005). Approximation of
smooth functions on compact two-point homogeneous spaces. \textit{J.
Funct. Anal.} \textbf{220} 401--423.




\bibitem{Cao2013}
\textsc{Cao, F.}, \textsc{Lin, S.}, \textsc{Chang, X.} and
\textsc{Xu, Z.} (2013). Learning rates of regularized regression on
the unit sphere. \textit{Sci. China Math.} \textbf{56} 861--876.


\bibitem{Chang2000}
\textsc{Chang, T.}, \textsc{Ko, D.}, \textsc{Royer, J.} and
\textsc{Lu, J.} (2000). Regression techniques in plate tectonics.
\textit{Statis. Sci.} \textbf{15} 342--356.


\bibitem{Cucker2001}
\textsc{ Cucker, F.} and  \textsc{Smale, S.} (2001). On the
mathematical foundations of learning. \textit{Bull. Amer. Math.
Soc.} \textit{39} 1--49.


\bibitem{Cucker2002}
\textsc{ Cucker, F.} and  \textsc{Smale, S.} (2002).  Best choices
for regularization parameters in learning theory: on the
bias-variance problem. \textit{Found. Comput. Math.} \textbf{2}
413--428.

\bibitem{Devore2006}
\textsc{Devore, R. A.}, \textsc{Kerkyacharian, G.}, \textsc{Picard,
D.} and \textsc{Temlyakov, V.} (2006). Approximation methods for
supervised learning. \textit{Found. Comput. Math.} \textbf{6} 3--58.




\bibitem{Dolelson2003}
\textsc{ Dodelson, S.} (2003). \textit{ Modern Cosmology.}  Academic
Press, London.

\bibitem{Downs2003}
\textsc{Downs, T.} (2003). Spherical regression. \textit{Biometrika}
\textbf{90}, 655-668.



\bibitem{Freeden1998}
\textsc{ Freeden, W.}, \textsc{Gervens, T.} and \textsc{ Schreiner,
M.} (1998). \textit{Constructive Approximation on the Sphere.}
Oxford University Press Inc., New York.

\bibitem{Filbir2004}
Textsc{Filbir, F.} and \textsc{Themistoclakis, W.} (2004). On the
construction of de la vall\`{e}e poussin means for orthogonal
polynomials using convolution structures. \textit{J. Comput. Anal.
Appl.} \textbf{6}, 297-312.




\bibitem{Gyorfi2002}
\textsc {Gy\"{o}rfy, L.}, \textsc{ Kohler, M.}, \textsc{ Krzyzak,
A.} and \textsc{   Walk, H.} (2002). \textit{ A Distribution-Free
Theory of Nonparametric Regression.} Springer, Berlin.


\bibitem{Kerkyacharian2012}
 \textsc{Kerkyacharian, G.}, \textsc{Nickl, R.} and \textsc{Picard, D.}
(2011). Concentration inequalities and confidence bands for needlet
density estimators on compact homogeneous manifolds.
\textit{Probability Theory and Related Fields} \textbf{153}
363--404.


\bibitem{LeGia2008}
\textsc{ Le Gia, Q.}, and  \textsc{ Mhaskar, H.} (2008). Localized
linear polynomial operators and quadrature formulas on the sphere.
\textit{ SIAM J. Numer. Anal.} pages \textbf{47} 440--466.



\bibitem{Li2009}
\textsc{Li, L.}(2009). Regularized least square regression with
spaherical polynomial kernels. Inter. J. Wavelets,
 Multiresolution and Inform. Proces.  \textit{7}  781--801.





\bibitem{Lin2014}
\textsc{Lin, S.}, \textsc{ Zeng, J.}, \textsc{ Fang, J.} and
\textsc{Xu, Z.} (2014). Learning rates of $l^q$ coefficient
regularization learning with Gaussian kernel. \textit{Neural
Comput.} \textbf{26} 2350--2378.


\bibitem{Maiorov1999}
\textsc{Maiorov, V.} and \textsc{Ratsaby, J.} (1999). On the degree
of approximation by manifolds of finite pseudo-dimension.
\textit{Constr. Approx.} \textbf{15} 291--300.

\bibitem{Maiorov2006}
\textsc{Maiorov, V.} (2006). Pseudo-dimension and entropy of
manifolds formed by affine invariant dictionary. \textit{Adv.
Comput. Math.} \textbf{25} 435--450.

\bibitem{Marzio2014}
\textsc{Marzio, M.}, \textsc{Panzera, A.} and \textsc{Taylor, C.}
(2014). Nonparametric regression for spherical data. \textit{J.
Amer. Statis. Assoc.} \textbf{109} 748--763.

\bibitem{Mhaskar1999}
\textsc{ Mhaskar, H.}, \textsc{Narcowich, F.} and \textsc{ Ward,
J.}(1999) Approximation properties of zonal function networks using
scattered data on the sphere. \textit{Adv. Comput. Math.}
\textbf{11} 121-137.


\bibitem{Mhaskar2000}
\textsc{Mhaskar, H. N.}, \textsc{Narcowich, F. J.} and \textsc{Ward.
J. D.} (2000). Spherical Marcinkiewicz-Zygmund inequalities and
positive quadrature. \textit{Math. Comput.} \textbf{70} 1113--1130.

\bibitem{Mhaskar2005}
\textsc{Mhaskar, H.} (2005). On the representation of smooth
functions on the sphere using initely many bits. \textit{Appl.
Comput. Harmon. Anal.} \textit{18} 215--233.



\bibitem{Mhaskar2010}
\textsc{  Mhaskar, H.}, \textsc{ Narcowich, F.},  \textsc{ Prestin,
J.} and \textsc{ Ward, J.} (2010). $L^p$ Bernstein estimates and
approximation by spherical basis functions. \textit{Math. Comput.}
\textbf{79} 1647--1679.


\bibitem{Minh2006}
\textsc{ Minh, H.} (2006). \textit{Reproducing kernel Hilbert spaces
in learning theory} Ph. D. Thesis in Mathematics, Brown University.



\bibitem{Minh2010}
\textsc{ Minh, H.} (2010). Some Properties of Gaussian reproducing
kernel Hilbert spaces and their implications for function
approximation and learning theory. \textit{Constr. Approx.}
\textbf{32} 307--338.





\bibitem{Monnier2011}
\textsc{Monnier, J.} (2011). Nonparametric regression on the
hyper-sphere with uniform design. \textit{Test} \textbf{20}
412--446.



\bibitem{Narcowich2006}
\textsc{Narcowich, F.}, \textsc{Petrushev, V.} and \textsc{Ward, J.}
(2006). Localized tight frames on spheres. \textit{SIAM J. Math.
Anal.} \textit{38} 574--594.

\bibitem{Narcowich20061}
\textsc{Narcowich, F.}, \textsc{Petrushev, V.} and \textsc{Ward, J.}
(2006). Decomposition of Besov and Triebel-Lizorkin spaces on the
sphere.  \textit{J. Funct. Anal.} \textit{238} 530--564.

%

\bibitem{Pelletier2006}
\textsc{Pelletier, B.} (2006). Non-parametric regression estimation
on closed Riemannian manifolds. \textit{J. Nonpar. Statis.}
\textbf{18} 57-67.

\bibitem{Scholkopf2001}
\textsc{Sch\"{o}lkopf, B} and \textsc{Smola, A. J.} (2001).
\textit{Learning with Kernel: Support Vector Machine,
Regularization, Optimization, and Beyond (Adaptive Computation and
Machine Learning)}. The MIT Press, Cambridge.


\bibitem{Shi2011}
\textsc{Shi, L.}, \textsc{Feng, Y.} and \textsc{Zhou, D. X.} (2011).
Concentration estimates for learning with $l^1$-regularizer and data
dependent hypothesis spaces. \textit{Appl. Comput. Harmon. Anal.}
\textbf{31}  286--302.

\bibitem{Tong2010}
\textsc{Tong, H.}, \textsc{Chen, D.} and  \textsc{Yang, F.} (2010).
Least square regression with $l^p$-coefficient regularization.
\textit{Neural Comput.} \textbf{22} 3221--3235.

\bibitem{Tibshirani1995}
\textsc{Tibshirani, R.} (1995). Regression shrinkage and selection
via the LASSO. \textit{J. ROY. Statist. Soc. Ser. B} \textbf{ 58}
267--288.


\bibitem{Tsai2006}
\textsc{Tsai, Y.} and \textsc{  Shih, Z}. (2006). All-frequency
precomputed radiance transfer using spherical radial basis functions
and clustered tensor approximation. \textit{ ACM Trans. Graph.}
\textbf {25}  967--976.

\bibitem{Wu2008}
\textsc{Wu, Q} and \textsc{Zhou, D. X.} (2008). Learning with sample
dependent hypothesis space. \textit{Comput. Math. Appl.} \textbf{56}
2896--2907.

\bibitem{Zhou2006}
\textsc{Zhou, D. X.} and \textsc{Jetter, K.} (2006). Approximation
with polynomial kernels and SVM classifiers. \textit{Adv. Comput.
Math.} \textbf{25} 323--344.

\end{thebibliography}
\end{document}